\documentclass[twoside]{article}

\usepackage[accepted]{aistats2026}
\usepackage[round]{natbib}

\usepackage{amsthm}
\usepackage{algorithm}
\usepackage{algorithmic}
\usepackage{blindtext}
\usepackage{titlesec}
\usepackage{microtype}
\usepackage{graphicx}
\usepackage{subfigure}
\usepackage{booktabs} 
\usepackage[cjk]{kotex}
\usepackage{amsfonts, amsmath, amssymb, amscd, exscale, epic, eepic, epsfig, dsfont}
\usepackage{mathtools}
\usepackage{fancyhdr,hyperref}
\usepackage[capitalize, nameinlink]{cleveref}
\usepackage{bigints}
\usepackage{float}
\usepackage{relsize}
\usepackage{color}
\usepackage{tikz-cd}
\usepackage{makeidx}
\usepackage{caption}
\usepackage{bm}
\usepackage{listings,comment}
\usepackage{mathrsfs}
\usepackage{enumerate, makecell, multirow, indentfirst}
\usepackage[utf8]{inputenc} 
\usepackage[T1]{fontenc}    
\usepackage[english]{babel}

\newcommand{\bSigma}{\mathbf{\Sigma}}

\newcommand{\Ecr}{\mathscr{E}}

\newcommand{\bb}{\mathbf{b}}

\newcommand{\pb}{\mathbf{p}}
\newcommand{\qb}{\mathbf{q}}

\newcommand{\Ab}{\mathbf{A}}
\newcommand{\Bb}{\mathbf{B}}

\newcommand{\Ib}{\mathbf{I}}

\newcommand{\Vb}{\mathbf{V}}

\newcommand{\cA}{\mathcal{A}}

\newcommand{\cH}{\mathcal{H}}

\newcommand{\cN}{\mathcal{N}}
\newcommand{\cO}{\mathcal{O}}

\newcommand{\cX}{\mathcal{X}}
\newcommand{\cY}{\mathcal{Y}}
\newcommand{\cZ}{\mathcal{Z}}

\newcommand{\EE}{\mathbb{E}} 
\newcommand{\PP}{\mathbb{P}} 
\newcommand{\RR}{\mathbb{R}} 
\ifx\BlackBox\undefined
\newcommand{\BlackBox}{\rule{1.5ex}{1.5ex}}  
\fi

\ifx\QED\undefined
\def\QED{~\rule[-1pt]{5pt}{5pt}\par\medskip}
\fi

\usepackage{etoc}
\etocdepthtag.toc{mtchapter}
\etocsettagdepth{mtchapter}{subsection}
\etocsettagdepth{mtappendix}{none}

\usepackage{siunitx}

\theoremstyle{plain} 
\newtheorem{lemma}{\textbf{Lemma}} 

\newtheorem{theorem}{\textbf{Theorem}}
\newtheorem{corollary}{\textbf{Corollary}}
\newtheorem{assumption}{\textbf{Assumption}}

\newtheorem{definition}{\textbf{Definition}}
 
\newtheorem{problem}{\textbf{Problem}}
\newtheorem{proposition}{\textbf{Proposition}}

\theoremstyle{definition}

\newcommand{\cov}{\operatorname{cov}}
\newcommand{\xor}{\operatorname{xor}}
\begin{document}

%

%

\twocolumn[

\aistatstitle{Nearly Optimal Best Arm Identification for Semiparametric Bandits}

\aistatsauthor{ Seok-Jin Kim }

\aistatsaddress{ Columbia University } ]

\begin{abstract}
We study fixed-confidence Best Arm Identification (BAI) in semiparametric bandits, where rewards are linear in arm features plus an unknown additive baseline shift.
Unlike linear-bandit BAI, this setting requires orthogonalized regression, and its instance-optimal sample complexity has remained open.
For the transductive setting, we establish an attainable instance-dependent lower bound characterized by the corresponding linear-bandit complexity on shifted features.
We then propose a computationally efficient phase-elimination algorithm based on a new $\mathcal{X}\mathcal{Y}$-design for orthogonalized regression.
Our analysis yields a nearly optimal high-probability sample-complexity upper bound, up to log factors and an additive $d^2$ term, and experiments on synthetic instances and the Jester dataset show clear gains over prior baselines.
\end{abstract}


\section{Introduction}

In the multi-armed bandit (MAB) framework, a learner sequentially selects actions and receives rewards from unknown distributions \citep{LS19bandit-book}. Beyond regret minimization, a major line of work studies \textit{Best Arm Identification} (BAI) \citep{soare2014best,kaufmann2016complexity,jamieson2014best}, where the objective is to identify the optimal action. 
The two standard formulations are the fixed-confidence setting \citep{fiez2019sequential,soare2014best} and the fixed-budget setting \citep{degenne2020gamification,jedra2020optimal}. We focus on the \textit{fixed-confidence} setting: given a risk parameter $\delta$, the learner must return the best arm with probability at least $1-\delta$. Performance is measured by the resulting \textit{sample complexity}.

Classical MAB models treat each arm as an unrelated option \citep{auer2002finite,langford2007epoch}. In many applications, however, arms are associated with feature vectors and rewards are structured through those features. When the mean reward is linear in the feature vector, one obtains the \textit{linear bandit} model, which has been studied extensively \citep{goldenshluger2013linear,abbasi2011improved,li2019nearly,rusmevichientong2010linearly}.

Purely linear models are often too restrictive for practice. In mobile health, for example, a user's baseline health state may vary in complex and non-stationary ways that are unrelated to the treatment effect \citep{greenewald2017action}. In such treatment-regime problems, each arm corresponds to a treatment, and ``do nothing'' often acts as a reference arm. The reward of that reference arm is a nuisance component that may be highly complex, while the \textit{treatment effect} remains comparatively simple. Motivated by this setting, \citet{greenewald2017action} studied a model in which the baseline reward is arbitrary but the treatment effect, namely the difference between the reward of treatment $x_i$ and the reference $x_1$, follows a linear parametric model. Related semistructured models, where the nuisance component is complex but the effect of interest is simple, also appear in treatment-effect estimation \citep{kennedy2023towards,kennedy2024minimax} and sequential decision-making \citep{wen2025joint}.

This idea is captured by the \textbf{semiparametric reward model} of \citet{krishnamurthy2018semiparametric}. The model has been studied in contextual bandits \citep{greenewald2017action,krishnamurthy2018semiparametric,kim2019contextual} and, more recently, in the fixed-feature setting by \citet{pmlr-v291-kim25a}. Formally, the reward $r_t$ at time $t$ for action $a_t$ with feature vector $x_{a_t} \in \mathbb{R}^d$ is
\[
r_t = x_{a_t}^\top \theta^\star + \nu_t + \eta_t,
\]
where $\theta^\star \in \RR^d$ is an unknown parameter vector, $\nu_t$ is an arbitrary baseline shift chosen before action selection, and $\eta_t$ is random noise. This extends the linear bandit model (the special case $\nu_t = 0$) by allowing complex baseline dynamics such as those arising in recommender systems \citep{kim2019contextual} and clinical trials. Because BAI is typically carried out over a fixed set of candidate actions, robust experimental design under such baseline shifts is essential. \citet{pmlr-v291-kim25a} referred to this fixed-feature setting as \textbf{semiparametric bandits} and established a $\tilde{\mathcal{O}}(\sqrt{dT})$ regret bound together with the first BAI results for this setting.

\paragraph{Motivating Examples.} The baseline-shift model naturally captures various real-world scenarios:
\begin{itemize}
    \setlength\itemsep{0em}
    \item \emph{Clinical Trials:} Trials often involve fixed treatments administered to varying subjects. Here, $\nu_t$ represents the complex, non-stationary baseline response of the $t$-th subject (e.g., the response under ``do nothing''), while the treatment effect remains stable across the population \citep{greenewald2017action}.
    \item \emph{Ad Selection:} In landing page optimization without personal data, $\nu_t$ can represent the baseline clicking propensity of a user arriving at time $t$, reflecting external trends or user heterogeneity \citep{pmlr-v291-kim25a}.
    \item \emph{Human Ratings (Jester):} In the Jester joke rating dataset \citep{goldberg2001eigentaste}, users exhibit distinct tendencies to rate all items high or low, regardless of the specific joke. The term $\nu_t$ captures this evaluator-specific baseline, while the intrinsic joke quality remains parametric. We return to this example in Section~\ref{section; real_world_jester}.
\end{itemize}

\paragraph{Transductive Fixed-Confidence BAI.}
We study BAI in the more general \textit{transductive} setting \citep{fiez2019sequential}, where data collection and final recommendation need not share the same action set. The learner can sample from a \emph{source} feature set $\mathcal{X} = \{x_1, \dots, x_K\} \subset \mathbb{R}^d$, but must identify the best arm in a possibly different \emph{target} feature set $\mathcal{Z} = \{z_1, \dots, z_H\} \subset \mathbb{R}^d$. After collecting data from $\mathcal{X}$, the learner recommends $\hat{z} \in \mathcal{Z}$, where the true target optimum $z^\star = \arg\max_{z \in \mathcal{Z}} z^\top \theta^\star$ is assumed unique.
This formulation models settings in which experimentation is constrained to a design set, while the downstream decision is taken over a broader or different evaluation set. For example, in drug development many compounds and dosages may be testable in the lab ($\cX$), but only a subset is approved for deployment ($\cZ$) \citep{fiez2019sequential}. The usual non-transductive setting is recovered by taking $\mathcal{Z}=\mathcal{X}$, so all of our results immediately specialize to the standard case. 

\subsection{Research Questions}
\label{subsection; research questions}

We study fixed-confidence transductive BAI for semiparametric bandits. While \citet{pmlr-v291-kim25a} initiated the study of experimental design in this setting, it remains unclear whether their sample-complexity guarantees are optimal. More fundamentally, instance-dependent lower bounds for semiparametric BAI were not previously known. This leads to two central questions:

\begin{itemize}
    \item[\textbf{Q1.}] What is the attainable lower bound on the sample complexity for fixed-confidence BAI in semiparametric bandits?
    \item[\textbf{Q2.}] Can we design an algorithm that matches this lower bound?
\end{itemize}

In linear bandits, instance-optimal BAI relies on $\mathcal{XY}$-designs rather than G-optimal designs \citep{fiez2019sequential}. Semiparametric bandits, however, require \textbf{orthogonalized regression} to obtain consistent estimates; see Section~\ref{subsection; orthogonalized regression and policy design}. Although G-optimal design for orthogonalized regression is known \citep{pmlr-v291-kim25a}, the corresponding $\mathcal{XY}$-design problem has not been developed. Answering Q1 and Q2 therefore requires us to solve two challenges: (C1) identify an attainable instance-dependent lower bound, and (C2) construct an effective $\mathcal{XY}$-design for orthogonalized regression.

\subsection{Result Overview and Contribution}

Our main contributions are as follows:

\begin{itemize}
    \item \textbf{Lower bound:} We prove the first attainable instance-dependent lower bound for BAI in semiparametric bandits.
    \item \textbf{Algorithm:} We propose an efficient phase-elimination algorithm for transductive semiparametric BAI.
    \item \textbf{Sample-complexity guarantee:} We prove a high-probability upper bound controlled by the shifted linear-bandit benchmark up to logarithmic factors and an additive \(d^2\) term. The same benchmark also appears in our lower bound, and the guarantee immediately extends to the standard non-transductive case $\mathcal{Z}=\mathcal{X}$.
    \item \textbf{$\mathcal{XY}$-design for orthogonalized regression:} We introduce a new design construction that controls contrast variances for orthogonalized regression, which is the key technical ingredient behind our upper bound.
\end{itemize}

\paragraph{Comparison with Prior Work.}
To compare with prior semiparametric BAI results, we specialize here to the standard non-transductive setting \(\cZ=\cX\) and identify the common arms by \(z_i=x_i\) for \(i\in[K]\). Define \(q_i := z_i^\top \theta^\star\), let \(q^\star := \max_{i \in [K]} q_i\), and let \(q_{(1)} \ge q_{(2)} \ge \cdots\) denote the sorted expected rewards. Define the gaps \(\Delta_j := q^\star - q_{(j)}\), so \(0=\Delta_1 \le \Delta_2 \le \cdots \le \Delta_K\) and \(\Delta_2\) is the smallest nonzero gap.
In this non-transductive setting, prior semiparametric BAI algorithms such as SBE and G-Opt \citep{pmlr-v291-kim25a} guarantee sample complexity \(\tilde{\mathcal{O}}(d/\Delta_2^2)\). By contrast, our algorithm scales as \(\tilde{\mathcal{O}}(\tau_{\text{lin}}^*(\mathcal{X} - x_1))\), where \(\tau_{\text{lin}}^*(\mathcal{X} - x_1)\) is the optimal fixed-confidence sample complexity of the corresponding linear bandit instance with shifted feature set \(\cX-x_1\). Since \(\tau_{\text{lin}}^*(\mathcal{X} - x_1) \lesssim d/\Delta_2^2\) always holds and can be much smaller on favorable instances, our guarantee is never worse and can be substantially tighter. Appendix~\ref{section; apdx related work} gives a broader comparison with related semiparametric, linear-bandit, and corruption-robust literatures.


\subsection{Notations}
\label{subsection; notations}

We write $[n] := \{1, 2, \ldots, n\}$ for a positive integer $n$ and denote the $n$-dimensional simplex by $\Delta^{(n)}$. For a vector $x \in \mathbb{R}^d$, $\|x\|_p$ denotes the $\ell_p$ norm. For any positive semidefinite matrix $\mathbf{A}$, we define the weighted norm $\|x\|_\mathbf{A} = \sqrt{x^\top \mathbf{A} x}$. We use $\mathcal{O}(\cdot)$ or $\lesssim$ to suppress absolute constants, and $\tilde{\mathcal{O}}(\cdot)$ to further suppress logarithmic factors in the natural problem parameters when this causes no confusion.
The notation $a \asymp b$ means $a \lesssim b$ and $b \lesssim a$. \textit{Constants $c, C, c_1, \dots$ may change from line to line.} For sets $A, B \subset \mathbb{R}^n$ and a vector $a \in \mathbb{R}^n$, we define $A - a := \{a' - a \mid a' \in A\}$ and $A-B := \{a-b \mid a \in A, b \in B\}$.

Following \citet{pmlr-v291-kim25a}, we generalize the matrix-inverse-weighted norm $\| x\|_{\mathbf{A}^{-1}}$ to positive semidefinite matrices $\mathbf{A}$ (regardless of invertibility) as:
\[
\|x\|_{\mathbf{A}^{-1}} := \lim_{\lambda \to 0^+} \|x\|_{(\mathbf{A} + \lambda \mathbf{I}_d)^{-1}}.
\]
This definition coincides with the standard definition when $\mathbf{A}$ is full rank and remains well-defined and finite if $x$ lies in the column space of $\mathbf{A}$.
If $x$ does not lie in the column space of $\mathbf{A}$, then $\|x\|_{\mathbf{A}^{-1}}=+\infty$ under this convention. 

\section{Setup and Preliminaries}
\label{section; background}

\subsection{Fixed-confidence BAI}
\label{subsection; fixed confidence BAI}

For a confidence level \(\delta \in (0,1)\), a fixed-confidence algorithm sequentially selects source arms, stops at a random time \(\bm\tau(\delta)\), and outputs an estimate \(\hat z \in \cZ\) of the best target arm \(z^\star\). We require the algorithm to be \(\delta\)-correct:
\begin{align*}
\PP[\hat z \ne z^\star ] \le \delta.
\end{align*}
When this condition holds, we refer to the stopping time \(\bm \tau(\delta)\) as the \textit{sample complexity}.

\paragraph{Lower bound for transductive linear bandits.}
Consider a linear bandit problem with a \emph{source} feature set \(\cX\) and a \emph{target} feature set \(\cZ\). Let \(z^\star\) be the unique maximizer of \(z^\top \theta^\star\) over \(z\in\cZ\).
The optimal sample complexity for transductive BAI admits the following instance-dependent lower bound \citep{fiez2019sequential}:
\begin{align}\label{equation; lower bound transductive linear bandit}
    \bm\tau_{\operatorname{lin}}^\star(\cZ : \cX) :=
 \min_{\pb \in \Delta^{(K)}} \max_{z \in \cZ \setminus \{z^\star\} } \frac{\|z^\star-z\|_{(\sum_{i=1}^K p_i x_i x_i^\top)^{-1}}^2 }{|(z^\star-z)^\top \theta^\star|^2}.
\end{align}
The numerator captures the statistical resolution required to estimate the contrast \((z^\star-z)^\top \theta^\star\) under design \(\pb\). Thus \eqref{equation; lower bound transductive linear bandit} has the familiar ``resolution versus gap'' form: to certify that \(z^\star\) beats a competitor \(z\), the learner must estimate the corresponding contrast to an accuracy commensurate with the gap size. The design \(\pb\) controls the contrast variance through the inverse information matrix \((\sum_i p_i x_i x_i^\top)^{-1}\).
Under the generalized inverse-norm convention above, this benchmark also captures identifiability. In particular, if some relevant contrast \(z^\star-z\) lies outside \(\operatorname{span}(\cX)\), then the numerator is \(+\infty\) for every design \(\pb\), and hence \(\bm\tau_{\operatorname{lin}}^\star(\cZ : \cX)=+\infty\).

As a special case, setting \(\cZ=\cX\) recovers the standard (non-transductive) linear-bandit BAI lower bound:
\begin{align}\label{equation; lower bound linear bandit}
   &\bm\tau_{\operatorname{lin}}^\star(\cX) := \bm\tau_{\operatorname{lin}}^\star(\cX : \cX) \\
 &=\min_{\pb \in \Delta^{(K)}} \max_{x \in \cX \setminus \{x^\star\} } \frac{\|x^\star-x\|_{(\sum_{i=1}^K p_i x_i x_i^\top)^{-1}}^2 }{|(x^\star-x)^\top \theta^\star|^2}.
\end{align}
Later, for the shifted non-transductive benchmark, we use the shorthand
\[
\bm\tau_{\operatorname{lin}}^\star(\cX-x_1) := \bm\tau_{\operatorname{lin}}^\star(\cX : \cX-x_1).
\]

\paragraph{Attainability via \(\cX\cY\)-design.}
To achieve nearly optimal sample complexity in linear-bandit BAI, one uses \(\cX\cY\)-design rather than classical G-optimal design \citep{fiez2019sequential}. For an active target set \(\cA \subseteq \RR^d\) (typically \(\cA \subset \cZ\) during elimination), the \(\cX\cY\)-design problem is
\begin{align*}
     \min_{\pb \in \Delta^{(K)}} \max_{u,u' \in \cA} (u-u')^\top \Big(\sum_{i=1}^K p_i x_i x_i^\top\Big)^{-1} (u-u').
\end{align*}
Algorithms based on this design achieve nearly optimal complexity \citep{fiez2019sequential}.

The reason is that G-optimal design minimizes \(\max_{x\in\cX} x^\top(\sum_i p_i x_i x_i^\top)^{-1}x\), which is appropriate for \emph{prediction} over \(\cX\). In BAI, the relevant quantities are instead the \emph{pairwise contrasts} within the surviving set \(\cA\subseteq\cZ\). \(\cX\cY\)-designs are tailored precisely to control those contrast variances. This distinction becomes even more important in semiparametric bandits, where estimation is based on orthogonalized, centered features.


\subsection{Our Setup: Semiparametric Bandits}
\label{subsection; semiparametric bandits with fixed contexts}

The semiparametric bandit model with fixed source features was formalized by \citet{pmlr-v291-kim25a}. Let \(\cX = \{x_1,\dots, x_K\} \subset \mathbb{R}^d\) denote the \emph{source} feature set. At each time \(t = 1, 2, \ldots\), the learner selects a source arm \(a_t \in [K]\) and observes
\[
r_t = x_{a_t}^\top \theta^\star + \nu_t + \eta_t,
\]
where \(\nu_t \in \mathbb{R}\) is an arbitrary bounded shift and \(\eta_t\) is independent sub-Gaussian noise with proxy \(1\). Let \(\mathcal{H}_{t-1}\) be the history sigma-algebra generated by \(\{a_1, r_1, \dots, a_{t-1}, r_{t-1}\}\). We assume that \(\nu_t\) is \(\mathcal{H}_{t-1}\)-measurable, so the baseline may be fully adaptive to the past.

In the transductive BAI problem, we are also provided with a target feature set \(\cZ=\{z_1,\dots,z_H\}\subset\RR^d\).
The objective is to identify the unique best arm
\(
z^\star = \arg\max_{z\in\cZ} z^\top \theta^\star
\)
with probability at least \(1-\delta\).

Following prior work \citep{pmlr-v291-kim25a,krishnamurthy2018semiparametric,kim2019contextual,choi2023semi}, we impose the following boundedness assumption:
\begin{assumption}[Boundedness]
\label{assumption; boundedness}
For all \(t \geq 1\), \(i \in [K]\), and \(h \in [H]\), we assume \(|\nu_t| \leq 1\), \(\|x_i\|_2 \leq 1\), and \(\|z_h\|_2 \leq 1\).
We also assume \(\|\theta^\star\|_2 \leq 1\).
\end{assumption}

The term \(\nu_t\) represents a complex baseline or drift that is \emph{not} modeled parametrically. Because it may adapt to past observations, standard least-squares regression on \((x_{a_t},r_t)\) is not appropriate. We therefore use \textbf{orthogonalized regression}, which mitigates the effect of \(\nu_t\) by regressing on centered features \(x_{a_t}-\bar x_{\pb}\).

\subsection{Orthogonalized Regression and Policy Design}
\label{subsection; orthogonalized regression and policy design}

Orthogonalized regression was developed precisely for the semiparametric reward model \citep{krishnamurthy2018semiparametric,kim2019contextual,pmlr-v291-kim25a}. Given data \(\{ x_{a_s}, r_s\}_{s=1}^t\) generated under a fixed policy \(\pb =(p_1, \dots, p_K)\in \Delta^{(K)}\), define the policy mean \(\bar x_{\pb} := \sum_{i=1}^K  p_i x_i \). The estimator is the ridge-regression fit obtained from centered covariates \(x_{a_s} - \bar x_{\pb}\):
\begin{align*}
    &(\beta \Ib_d +\sum_{s=1}^t (x_{a_s}-\bar x_\pb)(x_{a_s}-\bar x_\pb)^\top)^{-1} \sum_{s=1}^t (x_{a_s}-\bar x_\pb) r_s.
\end{align*}
To see why the baseline does not bias the estimator, fix a policy \(\pb\) and write
\[
\tilde{x}_{a_s} := x_{a_s}-\bar x_\pb,
\qquad
\widehat{\Vb}_t := \sum_{s=1}^t \tilde{x}_{a_s}\tilde{x}_{a_s}^\top.
\]
Since \(x_{a_s} = \tilde{x}_{a_s} + \bar x_\pb\), the estimation error admits the decomposition
\begin{align*}
\hat{\theta}_t - \theta^\star
&= (\widehat{\Vb}_t + \beta \Ib_d)^{-1}
\sum_{s=1}^t \tilde{x}_{a_s}\bigl(\bar x_\pb^\top \theta^\star + \nu_s + \eta_s\bigr) \\
&\quad - \beta(\widehat{\Vb}_t + \beta \Ib_d)^{-1}\theta^\star.
\end{align*}
Under the fixed policy \(\pb\), we have \(\EE[\tilde{x}_{a_s}\mid \cH_{s-1}] = 0\). Therefore
\[
\EE\!\left[\tilde{x}_{a_s}\bigl(\bar x_\pb^\top \theta^\star + \nu_s\bigr)\mid \cH_{s-1}\right] = 0,
\]
because \(\bar x_\pb^\top \theta^\star + \nu_s\) is \(\cH_{s-1}\)-measurable. In particular, the nuisance shift \(\nu_s\) cancels from the estimating equation in conditional expectation. The remaining term \(\sum_{s=1}^t \tilde{x}_{a_s}\eta_s\) is the usual mean-zero noise term, while the final term is the regularization bias. This is the basic reason orthogonalized regression remains consistent even when the baseline shift is fully adaptive to the past.

\paragraph{Concentration Bounds.}
\citet{pmlr-v291-kim25a} established sharp concentration bounds for this estimator. Define the covariance of a policy \(\pb\) by
\begin{align*}
&\bSigma_{\operatorname{cov},\pb} := \sum_{i=1}^K p_i(x_i - \bar x_\pb)(x_i - \bar x_\pb)^\top, \\
&\text{where } \bar{x}_\pb := \sum_{i=1}^K p_i x_i.
\end{align*}
Suppose \(t\) samples are collected under policy \(\pb\). If the regularizer satisfies \(\beta \asymp \log(t/\delta)\), and for a vector \(y\) of interest there exist constants \(L, M > 0\) such that
\begin{align*}
    &y^\top \bSigma_{\operatorname{cov},\pb}^{-1} y \le L,
    \qquad
    \max_{i \in [K]} \| x_i- \bar x_\pb \|^2_{\bSigma_{\operatorname{cov},\pb}^{-1}} \le M,
\end{align*}
then
\begin{align}\label{equation; concentration orthogonalized regression}
|y^\top(\hat\theta_t - \theta^\star) | \leq c_1\frac{\sqrt{L \log(t/\delta)}}{\sqrt{t}} + c_2\frac{M\sqrt{L} \log(d/\delta)}{t},
\end{align}
for absolute constants \(c_1, c_2 >1\).
Consequently, to resolve a gap \(\Delta\), it suffices to take
\begin{align}\label{equation; sample complexity orthogonalized regression}
n \gtrsim R_1\frac{L}{\Delta^2} \log\left(\frac{L}{\Delta \delta}\right) + R_2 \frac{ M\sqrt{L}}{\Delta} \log\left(\frac{d}{\delta}\right),
\end{align}
to guarantee \(|y^\top (\hat \theta_t - \theta^\star)| \le \Delta\) for some absolute constants \(R_1, R_2>0\). For a finite family of contrasts, the same bound holds uniformly after a union bound. The proof-level values of these constants are in fact moderate (well below \(10^2\)), but as in many modern ML and bandit works, we treat their practical counterparts as tunable hyperparameters.

The dominant term in \eqref{equation; sample complexity orthogonalized regression} scales with \(L/\Delta^2\), where \(L\) is the inverse-covariance norm of the relevant contrast under policy \(\pb\). The second term, involving the source-only stability quantity \(M\), is lower-order in \(t\).

\paragraph{Approximate G-optimal design.}
Since the leading term in \eqref{equation; sample complexity orthogonalized regression} is governed by \(L\), a natural design objective is
\begin{align*}
\max_{x \in \cX} x^\top \bSigma_{\operatorname{cov},\pb}^{-1} x.
\end{align*}
\citet{pmlr-v291-kim25a} provided a procedure to find a policy \(\pb_G\) satisfying:
\begin{align}\label{equation; p_G}
    \max_{x \in \cX} x^\top \bSigma_{\operatorname{cov},\pb_{G}}^{-1} x \le 4d.
\end{align}
While \(\pb_G\) controls prediction error uniformly over \(\cX\), fixed-confidence BAI requires accurate estimation of \emph{pairwise} target contrasts. Attaining instance-optimal sample complexity therefore calls for an \(\cX\cY\)-style objective adapted to orthogonalized regression. This is nontrivial because \(\bSigma_{\operatorname{cov},\pb}\) depends on \(\pb\) through both the weights and the centering term \(\bar{x}_\pb\).

\paragraph{Challenge: \(\cX\cY\)-design for orthogonalized regression.}
Introducing an active target set \(\cA \subseteq \cZ\), the optimal design objective becomes:
\begin{align*}
\max_{u,u'\in \cA} (u-u')^\top \bSigma_{\operatorname{cov},\pb}^{-1} (u-u').
\end{align*}
This quantity is exactly the relevant \(L\) term in \cref{equation; concentration orthogonalized regression} when the vectors of interest are pairwise differences \(u-u'\). The difficulty is that the optimization problem is non-convex, so even understanding its optimum is nontrivial. Our subsequent algorithmic development gives a constant-factor construction that is sufficient for the final high-probability upper bound.

\section{Lower Bounds}
\label{Section; lower bounds}

We now establish a lower bound on the sample complexity of fixed-confidence transductive BAI in semiparametric bandits. We begin with the second-moment matrix induced by shifted source features.

\begin{definition}
\label{definition; second moment}
For a policy $\pb = (p_1, \dots, p_K) \in \Delta^{(K)}$, we define the second moment of the shifted features $\cX - x_1$ as:
\begin{align}
\bSigma_{-1,\pb} = \sum_{i=1}^K p_i (x_i - x_1)(x_i - x_1)^\top.
\end{align}
Similarly, for any $j \in [K]$, let $\bSigma_{-j,\pb}$ denote the second moment matrix for the features shifted by $x_j$ (i.e., $\cX - x_j$).
\end{definition}

For the corresponding linear bandit problem with source features $\cX-x_1$ and target set $\cZ$, the optimal instance-dependent sample complexity for transductive BAI is \citep{fiez2019sequential}
\begin{align}\label{equation; lower bound transductive linear bandit shifted}
    \bm\tau_{\operatorname{lin}}^\star(\cZ : \cX-x_1) =
 \min_{\pb \in \Delta^{(K)}} \max_{z \in \cZ \setminus \{z^\star\} } \frac{\|z^\star-z\|_{\bSigma_{-1,\pb}^{-1}}^2 }{|(z^\star-z)^\top \theta^\star|^2}.
\end{align}
The next proposition shows that this shifted linear benchmark is unavoidable in the worst case over admissible baseline-shift sequences.

\begin{proposition}[Lower bound for transductive BAI]
\label{proposition; transductive lower bound}
Under Assumption~\ref{assumption; boundedness}, consider any algorithm for fixed-confidence transductive BAI in semiparametric bandits with source features $\cX$ and target features $\cZ$, required to be \(\delta\)-correct for every admissible shift sequence \(\{\nu_t\}\). Then there exists an admissible deterministic shift sequence for which the expected stopping time $\bm\tau(\delta)$ satisfies:
\begin{align*}
\EE[\bm\tau(\delta)] \gtrsim \log\left(\frac{1}{\delta}\right) \times \bm\tau^\star_{\operatorname{lin}}(\cZ : \cX-x_1).
\end{align*}
\end{proposition}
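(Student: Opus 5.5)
The plan is to reduce to the lower bound for transductive linear bandits of \citet{fiez2019sequential} by choosing a specific deterministic shift sequence. Take $\nu_t := -x_1^\top\theta^\star$ for all $t$. This is constant, hence $\cH_{t-1}$-measurable, and $|\nu_t|\le \|x_1\|_2\|\theta^\star\|_2\le 1$ by Assumption~\ref{assumption; boundedness}, so it is admissible. Under this shift the observed reward is
\[
r_t=(x_{a_t}-x_1)^\top\theta^\star+\eta_t .
\]
This is exactly a linear bandit with source features $\cX-x_1$ and the same parameter $\theta^\star$. The target set $\cZ$ and its best arm $z^\star$ are unchanged, since the identification goal depends only on $\theta^\star$.

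The subtlety is that $\delta$-correctness must be inherited on \emph{alternative} instances. The change-of-measure argument needs the algorithm to be $\delta$-correct on every alternative parameter $\lambda$ with $z^\star(\lambda)\neq z^\star$, in the linear model with features $\cX-x_1$. I would handle this through the shift $\nu_t=-x_1^\top\lambda$ applied to the semiparametric instance with parameter $\lambda$. That instance produces reward law $(x_{a_t}-x_1)^\top\lambda+\eta_t$, and the algorithm is $\delta$-correct on it by assumption. Boundedness of the alternatives needs care: $\|\lambda\|_2\le 1$ is required for admissibility. The infimum over alternatives in the Fiez et al.\ bound is achieved by $\lambda$ arbitrarily close to $\theta^\star$, namely the projection onto the halfspace $\{(z^\star-z)^\top\lambda\le 0\}$ plus $\epsilon$. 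So I would restrict to alternatives within a small ball around $\theta^\star$. If $\|\theta^\star\|_2=1$ exactly this fails, and I would rescale, or note that the projected alternative has norm at most $\|\theta^\star\|$ plus a gap-sized term, absorbing the difference into the constant hidden in $\gtrsim$.

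With this in place, I would run the standard transportation argument. Write $N_i$ for the number of pulls of arm $i$. Because the noise is Gaussian with unit variance (a valid sub-Gaussian instance), for every alternative $\lambda$,
\[
\sum_i \EE[N_i]\tfrac12\big((x_i-x_1)^\top(\theta^\star-\lambda)\big)^2\ge \mathrm{kl}(\delta,1-\delta)\gtrsim\log(1/\delta).
\]
Set $p_i=\EE[N_i]/\EE[\bm\tau]$ and, for each $z\neq z^\star$, choose the closest $\lambda$ violating $(z^\star-z)^\top\lambda>0$. The minimal value of $\|\theta^\star-\lambda\|^2_{\bSigma_{-1,\pb}}$ over such $\lambda$ is $|(z^\star-z)^\top\theta^\star|^2/\|z^\star-z\|^2_{\bSigma_{-1,\pb}^{-1}}$. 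This yields
\[
\EE[\bm\tau]\gtrsim \log(1/\delta)\,\max_z \|z^\star-z\|^2_{\bSigma_{-1,\pb}^{-1}}/\Delta_z^2\ge \log(1/\delta)\,\bm\tau^\star_{\operatorname{lin}}(\cZ:\cX-x_1).
\]
When $z^\star-z$ lies outside the column space of $\bSigma_{-1,\pb}$, the infimum over alternatives is zero and the bound is infinite, which is consistent with the generalized-norm convention.

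I expect the main obstacle to be the admissibility of the alternative instances, that is, keeping $\|\lambda\|\le1$ and $|x_1^\top\lambda|\le1$. A secondary issue is handling the random stopping time via Wald's identity, for which I would cite \citet{kaufmann2016complexity}.
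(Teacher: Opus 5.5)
Your reduction is the paper's. The paper also sets $\nu_t\equiv -x_1^\top\theta^\star$, identifies the observation law with that of a linear bandit on $\cX-x_1$, and then cites the transductive linear lower bound of Fiez et al. The paper never says why $\delta$-correctness carries over to the alternatives that bound needs. Your step of realizing each alternative $\lambda$ by the instance with parameter $\lambda$ and shift $\nu_t=-x_1^\top\lambda$ supplies that. Your transportation computation, including the infinite case, is also correct.

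The step that fails is your fix for admissibility of the alternatives, $\|\lambda\|_2\le 1$. Write $y=z^\star-z$ and $\bSigma=\bSigma_{-1,\pb}$. The minimizer is $\lambda=\theta^\star-\Delta_z\bSigma^{-1}y/\|y\|^2_{\bSigma^{-1}}$, whose displacement from $\theta^\star$ has $\bSigma$-norm $\Delta_z/\|y\|_{\bSigma^{-1}}$. Its Euclidean length, $\Delta_z\|\bSigma^{-1}y\|_2/\|y\|^2_{\bSigma^{-1}}$, is not gap-sized. It leaves the unit ball precisely in the ill-conditioned instances where $\bm\tau^\star_{\operatorname{lin}}$ is large. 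Restricting to a small ball around $\theta^\star$, or rescaling, then changes the infimum by more than a constant.

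This cannot be repaired within the bounded class; here is a counterexample. Take $\cX=\{0,e_1,\epsilon e_2\}$, $\theta^\star=e_1/2$, and $\cZ=\{\pm(0.1,-0.025)\}$. Then $\Delta=0.1$ and $\bm\tau^\star_{\operatorname{lin}}(\cZ:\cX-x_1)=(2+0.5/\epsilon)^2\ge 1/(4\epsilon^2)$. Consider an algorithm that only has to be correct when $\|\theta\|_2\le1$. It can stop once it certifies $\theta_1\ge 0.4$, running any consistent elimination rule in parallel to handle other instances. This is valid because then $0.2\theta_1-0.05\theta_2\ge 0.08-0.05\sqrt{0.84}>0$. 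Certifying $\theta_1\ge 0.4$ costs $O(\log(1/\delta))$ pulls of $x_1,x_2$, independent of $\epsilon$.

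So the bound requires $\delta$-correctness over unconstrained $\theta\in\RR^d$, with shifts allowed to absorb $x_1^\top\lambda$. The paper tacitly adopts this reading by invoking Fiez et al. State that quantifier explicitly instead of trying to absorb boundedness into constants, and your argument is complete.
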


\paragraph{Discussion.}
Proposition~\ref{proposition; transductive lower bound} identifies the fundamental worst-case benchmark for semiparametric BAI over admissible baseline-shift sequences. Two questions remain: whether the bound is attainable, and whether the specific anchor arm $x_1$ matters. The next result addresses the second issue by showing that different anchors change the benchmark by at most a constant factor.

\begin{proposition}[Compatibility of lower bounds]
\label{proposition; transductive compatibility}
For any indices $i,j \in [K]$, the following inequality holds:
\begin{align*}
    \bm\tau^\star_{\operatorname{lin}}(\cZ : \cX-x_i) \le 4 \cdot \bm\tau^\star_{\operatorname{lin}}(\cZ : \cX-x_j).
\end{align*}
\end{proposition}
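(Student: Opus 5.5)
Fix an optimal design $\pb$ for the anchor $x_j$, i.e.\ a minimizer in \eqref{equation; lower bound transductive linear bandit shifted} with $\bSigma_{-j,\pb}$ in place of $\bSigma_{-1,\pb}$. The aim is to build a design $\qb$ such that, in the positive semidefinite order,
\[
\bSigma_{-i,\qb} \succeq \tfrac14\,\bSigma_{-j,\pb}.
\]
Then $\|y\|^2_{\bSigma_{-i,\qb}^{-1}} \le 4\|y\|^2_{\bSigma_{-j,\pb}^{-1}}$ for every $y$, and this inequality survives the generalized-inverse convention because the column space of $\bSigma_{-i,\qb}$ contains that of $\bSigma_{-j,\pb}$. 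Applying it to $y=z^\star-z$, dividing by the gaps (which do not depend on the design), taking the max over $z$ and then the min over designs gives the claimed factor $4$.

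The natural candidate is $\qb = \tfrac12 \pb + \tfrac12 e_j$, mixing the optimal design with a point mass on the old anchor $x_j$. By linearity in the weights,
\[
\bSigma_{-i,\qb} = \tfrac12 \sum_k p_k (x_k-x_i)(x_k-x_i)^\top + \tfrac12 (x_j-x_i)(x_j-x_i)^\top .
\]
Write $x_k - x_j = (x_k - x_i) - (x_j - x_i)$ and use the elementary PSD inequality $(a-b)(a-b)^\top \preceq 2aa^\top + 2bb^\top$ for each $k$. Averaging over $\pb$ gives
\[
\bSigma_{-j,\pb} \preceq 2\,\bSigma_{-i,\pb} + 2\,(x_j-x_i)(x_j-x_i)^\top = 4\,\bSigma_{-i,\qb}.
\]
This is exactly the required domination with constant $4$.

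The only delicate point is checking that the PSD comparison transfers to the generalized inverse norm. If $A \succeq cB \succeq 0$, then $(A+\lambda I)^{-1} \preceq (cB+\lambda I)^{-1}$ for every $\lambda>0$. Letting $\lambda\to0^+$ gives $\|y\|_{A^{-1}}^2 \le c^{-1}\|y\|_{B^{-1}}^2$, and this also holds when the right side is $+\infty$. Note that the point-mass step uses only that $x_j \in \cX$ is an available source arm, so the argument is symmetric in $i$ and $j$. If the minimum in the definition is not attained, the same argument applied to near-optimal designs yields the result.
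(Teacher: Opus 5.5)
Your proposal is correct and is essentially the paper's proof. The paper uses the same splitting $x_k - x_j = (x_k - x_i) - (x_j - x_i)$ with the PSD inequality $(a+b)(a+b)^\top \preceq 2aa^\top + 2bb^\top$, and its normalized weights $\bar{\qb} = \qb/4$ are exactly your mixture $\tfrac12\pb + \tfrac12 e_j$ on the old anchor; the only cosmetic difference is that the paper runs the comparison for every $\pb$ and then minimizes, which avoids the attainment question you handle separately.
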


\paragraph{Discussion.}
Although \eqref{equation; lower bound transductive linear bandit shifted} is written with anchor $x_1$, Proposition~\ref{proposition; transductive compatibility} shows that the benchmark is effectively \textit{anchor-invariant}. 
Conceptually, this means that the hardness of the problem does not depend on an arbitrary reference arm.
Algorithmically, it lets us choose whichever anchor is most convenient for design and analysis.

Since the standard setting \(\cZ=\cX\) is a special case, we immediately obtain the following corollary.
\begin{corollary}[Lower bound: Non-transductive case]
By setting $\cZ=\cX$ in Proposition~\ref{proposition; transductive lower bound}, we recover the corresponding worst-case lower bound for the standard (non-transductive) semiparametric BAI problem:
\begin{align*}
\EE[\bm \tau(\delta)]
&\gtrsim \log\left(\frac{1}{\delta}\right)\times \bm{\tau}_{\operatorname{lin}}^\star(\cX : \cX-x_{1}) \\
&= \log\left(\frac{1}{\delta}\right)\times \bm{\tau}_{\operatorname{lin}}^\star(\cX-x_{1}).
\end{align*}
\end{corollary}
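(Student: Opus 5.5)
The corollary is a direct specialization of Proposition~\ref{proposition; transductive lower bound}, so the plan is to instantiate that proposition and then unfold definitions. No new probabilistic argument should be needed.

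First I check that the non-transductive problem is a legitimate instance of the transductive one. Take the target set $\cZ := \cX$, so that $H = K$ and $z_h = x_h$. Then:
\begin{itemize}
\item Assumption~\ref{assumption; boundedness} still holds, since $\|z_h\|_2 = \|x_h\|_2 \le 1$.
\item The unique target optimum $z^\star$ is exactly the unique best source arm $x^\star = \arg\max_{x\in\cX} x^\top\theta^\star$. This uses the standing uniqueness assumption.
\item A $\delta$-correct algorithm for standard semiparametric BAI, correct for every admissible shift sequence, is precisely a $\delta$-correct transductive algorithm with $\cZ=\cX$. Its stopping time $\bm\tau(\delta)$ is the same object in both formulations.
\end{itemize}
Hence the hypotheses of Proposition~\ref{proposition; transductive lower bound} are met. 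It supplies an admissible deterministic shift sequence under which
\[
\EE[\bm\tau(\delta)] \gtrsim \log(1/\delta)\,\bm\tau^\star_{\operatorname{lin}}(\cX : \cX - x_1).
\]
This is the first line of the corollary.

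For the equality, I would invoke the shorthand fixed in Section~\ref{subsection; fixed confidence BAI}, namely $\bm\tau_{\operatorname{lin}}^\star(\cX-x_1) := \bm\tau_{\operatorname{lin}}^\star(\cX : \cX-x_1)$. Here the target set is the unshifted $\cX$ and the source set is $\cX - x_1$. I will state explicitly that the argument order in $\bm\tau^\star_{\operatorname{lin}}(\cdot:\cdot)$ is (target : source). With that convention, substituting $\cZ=\cX$ into \eqref{equation; lower bound transductive linear bandit shifted} gives
\[
\min_{\pb}\max_{x\ne x^\star}\frac{\|x^\star-x\|^2_{\bSigma_{-1,\pb}^{-1}}}{|(x^\star-x)^\top\theta^\star|^2},
\]
which matches the shorthand.

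The only step needing any care is this bookkeeping. The shorthand must not be misread as the non-transductive benchmark on the shifted set, where both target and source would be $\cX - x_1$. That reading still gives the same value, because shifting the targets by $x_1$ leaves the contrasts $x^\star - x$, and the identity of the optimal arm, unchanged. So either reading yields the claimed equality, and I will note this in one line to remove any ambiguity.
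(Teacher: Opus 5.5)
Your proposal is correct and matches the paper, which obtains the corollary by substituting $\cZ=\cX$ into Proposition~\ref{proposition; transductive lower bound} and reading off the shorthand $\bm\tau_{\operatorname{lin}}^\star(\cX-x_1) := \bm\tau_{\operatorname{lin}}^\star(\cX : \cX-x_1)$ defined in Section~\ref{subsection; fixed confidence BAI}. Your extra remark is also right but not needed given that explicit definition: the alternative reading $\bm\tau_{\operatorname{lin}}^\star(\cX-x_1 : \cX-x_1)$ has the same value, because shifting the targets by $x_1$ leaves the contrasts $x^\star-x$, the gaps, and the optimal arm unchanged.
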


\section{Algorithm}
\label{section; algorithm}

We now present our algorithm for transductive BAI in semiparametric bandits. It follows the standard phase-elimination template from linear bandits \citep{fiez2019sequential}, but replaces the linear-bandit design step with a new \(\cX\cY\)-design tailored to orthogonalized regression.

\paragraph{Roadmap.}
The key technical ingredient is a new \(\cX\cY\)-design for orthogonalized regression, so we present it first. It controls \(\max_{u,u'\in\cA}\|u-u'\|^2_{\bSigma_{\cov,\pb}^{-1}}\), the leading variance term governing pairwise contrasts in orthogonalized regression. Once this contrast-focused design is in place, the rest of the algorithm follows the usual phase-elimination template, with an additional mixture with a semiparametric analogue of G-optimal design used only to control the lower-order source-stability term in \eqref{equation; sample complexity orthogonalized regression}.
\subsection{\texorpdfstring{\(\cX\cY\)-Design for Orthogonalized Regression}{XY-Design for Orthogonalized Regression}}

Fix an active target set \(\cA \subset \RR^d\) (typically \(\cA \subseteq \cZ\)). In analogy with linear-bandit \(\cX\cY\)-design, we define the semiparametric design objective as follows.

\begin{problem}[\(\cX\cY\)-design for orthogonalized regression]
\label{problem; XY-design for orthogonalized regression}
Define the maximum pairwise variance over the set \(\cA\) under policy \(\pb\) as:
\begin{align*}
\mathcal{V}_{\operatorname{cov}}(\cA : \cX, \pb) :=  \max_{u,u' \in \cA} \| u-u' \|_{ \bSigma_{\operatorname{cov}, \pb}^{-1}}^2.
\end{align*}
Our objective is to find a policy that minimizes this quantity:
\begin{align*}
\mathcal{V}_{\operatorname{cov}}^\star(\cA:\cX) := \min_{\pb \in \Delta^{(K)}} \mathcal{V}_{\operatorname{cov}}(\cA : \cX, \pb).
\end{align*}
\end{problem}
This problem is non-convex because \(\bSigma_{\operatorname{cov}, \pb}\) depends on \(\pb\) through both the weights and the centering term \(\bar{x}_\pb\), leading to a cubic dependence on \((p_1, \dots, p_K)\). Fortunately, exact optimization is unnecessary for our purposes. A constant-factor approximation with respect to the right linear-bandit benchmark is enough for our final high-probability upper bound.

\paragraph{Discussion: Sufficiency of Approximation.}
By Proposition~\ref{proposition; transductive lower bound}, the lower-bound benchmark is expressed through the \emph{linear} transductive BAI problem on shifted features \(\cX-x_1\). Therefore, it is enough to build a semiparametric policy whose contrast variances are within a constant factor of the optimal linear \(\cX\cY\)-design value on this shifted instance. This naturally suggests a reduction: compute a linear \(\cX\cY\)-design on \(\cX-x_1\) and convert it into a valid semiparametric policy.

\paragraph{Linear \(\cX\cY\)-design benchmark.}
For the corresponding linear instance on shifted features, define
\begin{align*}
\mathcal{V}_{\operatorname{lin}}^\star(\cA:\cX-x_1) := \min_{\pb \in \Delta^{(K)}} \max_{u,u'\in\cA} \|u-u'\|_{\bSigma_{-1,\pb}^{-1}}^2.
\end{align*}

\paragraph{Proposed Method for \(\cX\cY\)-Design.}
The construction is simple:
\begin{enumerate}
    \item Compute an optimal linear-bandit \(\cX\cY\)-design for the shifted source features \(\cX - x_1\) and active target set \(\cA\). Denote the resulting policy by \(\tilde{\pb} = (\tilde{p}_1, \tilde{p}_2, \dots, \tilde{p}_K)\); necessarily \(\tilde{p}_1=0\) because \(x_1-x_1=0\) carries no information.
    \item Form the semiparametric policy \(\pb_{\operatorname{xor}}\) by mixing \(\tilde{\pb}\) with the anchor arm:
    \[
    \pb_{\operatorname{xor}} = \left(\frac{1}{2}, \frac{1}{2}\tilde{p}_2, \dots, \frac{1}{2}\tilde{p}_K\right).
    \]
\end{enumerate}
Since \(\tilde{p}_1=0\), \(\pb_{\operatorname{xor}}\) is a valid distribution. 
The pseudocode appears in Algorithm~\ref{algorithm; xor}.

\begin{algorithm}[H]
\caption{\texttt{XOR}: \(\cX\cY\)-Design for \textbf{O}rthogonalized \textbf{R}egression}
\label{algorithm; xor}
\begin{algorithmic}[1]
\REQUIRE Target set \(\cA \subset \RR^d\), Source feature set \(\cX\).
\STATE Choose an anchor source arm, denoted by \(x_1\).
\STATE Compute the linear bandit \(\cX\cY\)-design policy \((\tilde{p}_1, \dots, \tilde{p}_K)\) for the active set \(\cA\) using source features \(\cX-x_1\), and set \(\tilde{p}_1=0\).
\STATE \textbf{Return} the policy \(\pb_{\operatorname{xor}}(\cA) = \big(\frac{1}{2}, \frac{\tilde{p}_2}{2}, \dots, \frac{\tilde{p}_K}{2}\big)\).
\end{algorithmic}
\end{algorithm}

\begin{proposition}[Performance of \texttt{XOR} design]
\label{proposition; performance of XY design}
The design \(\pb_{\operatorname{xor}}\) returned by Algorithm~\ref{algorithm; xor} satisfies:
\begin{align*}
\mathcal{V}_{\operatorname{cov}}(\cA:\cX,\pb_{\operatorname{xor}}(\cA)) \le 4 \cdot \mathcal{V}_{\operatorname{lin}}^\star(\cA: \cX-x_1).
\end{align*}
\end{proposition}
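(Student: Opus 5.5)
The plan is to prove a Loewner-order comparison
\[
\bSigma_{\operatorname{cov},\pb_{\operatorname{xor}}} \succeq \tfrac14 \bSigma_{-1,\tilde\pb},
\]
where $\tilde\pb$ is the optimal linear $\cX\cY$-design on $\cX-x_1$ computed in Algorithm~\ref{algorithm; xor}. Once this holds, the column space of $\bSigma_{\operatorname{cov},\pb_{\operatorname{xor}}}$ contains that of $\bSigma_{-1,\tilde\pb}$. Inverses then reverse the order (with the generalized-norm convention handled by taking limits $\lambda\to0^+$ of $(\cdot+\lambda \Ib_d)^{-1}$). Hence for all $u,u'\in\cA$,
\[
\|u-u'\|^2_{\bSigma_{\operatorname{cov},\pb_{\operatorname{xor}}}^{-1}} \le 4\|u-u'\|^2_{\bSigma_{-1,\tilde\pb}^{-1}}.
\]
Maximizing over pairs and using optimality of $\tilde\pb$ gives the bound $4\,\mathcal{V}_{\operatorname{lin}}^\star(\cA:\cX-x_1)$.

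For the comparison itself, I will use translation invariance of the covariance. Since $\bSigma_{\operatorname{cov},\pb}$ is the covariance of a random vector $X\sim\pb$, it is unchanged by shifting all features by $x_1$. Write $w_i := x_i-x_1$, so $w_1=0$, and let $\bar w := \sum_i \tilde p_i w_i/2$ be the mean under $\pb_{\operatorname{xor}}$. Then
\[
\bSigma_{\operatorname{cov},\pb_{\operatorname{xor}}} = \EE[ww^\top]-\bar w\bar w^\top,
\qquad \EE[ww^\top] = \tfrac12\bSigma_{-1,\tilde\pb} =: \tfrac12 \Bb .
\]
Also $\bar w = \tfrac12 m$ with $m:=\sum_i \tilde p_i w_i$, which gives
\[
\bSigma_{\operatorname{cov},\pb_{\operatorname{xor}}} = \tfrac12 \Bb - \tfrac14 m m^\top .
\]
By Cauchy--Schwarz (Jensen), $mm^\top \preceq \sum_i\tilde p_i w_iw_i^\top = \Bb$. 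For any vector $v$ this reads $(v^\top m)^2\le \sum_i \tilde p_i (v^\top w_i)^2$. Therefore
\[
\bSigma_{\operatorname{cov},\pb_{\operatorname{xor}}} \succeq \tfrac12\Bb-\tfrac14\Bb = \tfrac14\Bb .
\]

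The main point needing care is the inversion step when the matrices are singular. I would argue via $A\succeq \tfrac14 B \Rightarrow A+\lambda \Ib_d \succeq \tfrac14(B+4\lambda \Ib_d)$. This gives $(A+\lambda \Ib_d)^{-1}\preceq 4(B+4\lambda \Ib_d)^{-1}$, and letting $\lambda\to0^+$ matches the generalized norm definition, including the case where the right side is $+\infty$. Everything else is the short algebra above.
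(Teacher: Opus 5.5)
Your proof is correct and has the same structure as the paper's: establish $\bSigma_{\operatorname{cov},\pb_{\operatorname{xor}}} \succeq \tfrac14 \bSigma_{-1,\tilde\pb}$, reverse the order under inversion, then invoke the optimality of $\tilde\pb$. The only difference is that the paper gets this Loewner bound by citing Lemma~9 of \citet{pmlr-v291-kim25a}, whereas you derive it self-containedly from the identity $\bSigma_{\operatorname{cov},\pb_{\operatorname{xor}}} = \tfrac12\Bb - \tfrac14 mm^\top$ plus Jensen, and you also handle the singular case explicitly via the $\lambda \to 0^+$ limit, which the paper leaves implicit.
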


\paragraph{Implications of Proposition~\ref{proposition; performance of XY design}.}
Proposition~\ref{proposition; performance of XY design} is the key design result. Although Problem~\ref{problem; XY-design for orthogonalized regression} is non-convex, \texttt{XOR} achieves a constant-factor approximation to the \emph{optimal linear} \(\cX\cY\)-design on the shifted instance. Because our lower bound is stated in terms of exactly that benchmark, this approximation is sufficient for the final upper bound to track the benchmark up to logarithmic factors.

\subsection{Algorithm for Fixed-Confidence Transductive BAI}

We now describe the full elimination algorithm. In phase \(\ell\), with active target set \(\cA_\ell \subseteq \cZ\), we sample \(n_\ell\) source arms according to the mixture policy
\begin{align}\label{equation; p_ell}
\pb_{\ell} = \frac{1}{2} \big(\pb_{\operatorname{xor}}(\cA_\ell) + \pb_{G}\big),
\end{align}
where \(\pb_{\operatorname{xor}}(\cA_\ell)\) comes from Algorithm~\ref{algorithm; xor} and \(\pb_{G}\) is the approximate G-optimal design satisfying \eqref{equation; p_G}. We set the confidence radius to \(\varepsilon_\ell = 2^{-\ell}\) and define
\begin{align*}
\mathcal{V}_{\operatorname{cov}}(\cA_\ell:\cX, \pb_{\ell}) = \max_{u,u' \in \cA_\ell} \| u-u'\|^2_{\bSigma_{\operatorname{cov}, \pb_{\ell}}^{-1}}.
\end{align*}

\paragraph{Rationale for the Mixture Policy.}
The mixture \(\pb_{\ell}\) is designed to control both terms in the orthogonalized-regression concentration bound \eqref{equation; concentration orthogonalized regression}. The \(\cX\cY\)-component \(\pb_{\xor}\) controls contrast variances within the active set and therefore the leading \(1/\Delta^2\) contribution (the \(L\)-term in \eqref{equation; concentration orthogonalized regression}). The G-optimal component \(\pb_G\) controls prediction variance over the source set and stabilizes the lower-order source-stability contribution (the \(M\)-term). Combining them yields contrast efficiency without sacrificing uniform stability.

Based on \(\mathcal{V}_{\operatorname{cov}}(\cA_\ell:\cX, \pb_{\ell})\), we determine the phase length \(n_\ell\) as:
\begin{equation}\label{equation; n_ell}
\begin{aligned}
n_\ell = \bigg \lceil & R_1\frac{\mathcal{V}_{\operatorname{cov}}(\cA_\ell:\cX, \pb_{\ell})}{\varepsilon_\ell^2} \log\!\left(\frac{\mathcal{V}_{\operatorname{cov}}(\cA_\ell:\cX, \pb_{\ell})}{\varepsilon_\ell \delta_\ell}\right)  \\
&+ R_2 \frac{ 32d\sqrt{\mathcal{V}_{\operatorname{cov}}(\cA_\ell:\cX, \pb_{\ell})}}{\varepsilon_\ell} \log\!\left(\frac{d}{\delta_\ell}\right) \bigg \rceil,
\end{aligned}    
\end{equation}
where \(\varepsilon_\ell = 2^{-\ell}\) and \(\delta_\ell= \frac{\delta}{|\cA_\ell|^2\ell(\ell+1)}\).
The constants \(R_1, R_2\) correspond to those in the concentration bound \eqref{equation; sample complexity orthogonalized regression}.
In each phase, the algorithm collects \(n_\ell\) samples under \(\pb_\ell\), fits an orthogonalized-regression estimator \(\hat\theta_\ell\), eliminates targets whose estimated gap from the empirical best exceeds \(\varepsilon_\ell\), and then repeats on the surviving set. The full pseudocode appears in Algorithm~\ref{algorithm; transductive fixed confidence BAI}.

\begin{algorithm}[ht]
\caption{\texttt{SP-BAI}: Fixed-Confidence \textbf{BAI} for \textbf{S}emi\textbf{P}arametric Bandits}
\label{algorithm; transductive fixed confidence BAI}
\begin{algorithmic}[1]
\STATE \textbf{Input:} Source features \(\mathcal{X}=\{x_1,\dots,x_K\}\), Target features \(\cZ = \{z_1, \dots, z_H\}\), Confidence \(\delta \in (0,1)\). Hyperparameters \(R_1, R_2 >0\).
\STATE \textbf{Initialize:} Active set \(\mathcal{A}_1=\cZ\), phase index \(\ell =1\).
\WHILE{\(|\mathcal{A}_\ell|>1\)}
    \STATE Set confidence radius \(\varepsilon_\ell \gets 2^{-\ell}\).
    \STATE Set phase confidence \(\delta_\ell \gets \delta/(|\cA_\ell|^2\ell(\ell+1))\).
    \STATE Compute the sampling policy \(\pb_{\ell}\) for \(\mathcal{A}_\ell\) via \eqref{equation; p_ell}.
    \STATE Calculate phase length \(n_\ell\) via \eqref{equation; n_ell}.
    \STATE \textbf{Data Collection:}
    \STATE \quad Initialize \(\Bb \gets \mathbf{0}_{d\times d}\), \(\bb \gets \mathbf{0}_d\).
    \FOR{\(s=1\) \TO \(n_\ell\)}
        \STATE Sample \(a_s \sim \pb_{\ell}\) and observe reward \(r_s\).
        \STATE Compute centered feature: \(\tilde{x}_{a_s} \gets x_{a_s}-\sum_{i=1}^K \pb_{\ell}(i)x_i\).
        \STATE Update statistics: \(\Bb \gets \Bb + \tilde{x}_{a_s}\tilde{x}_{a_s}^\top\), \(\quad \bb \gets \bb + \tilde{x}_{a_s} r_s\).
    \ENDFOR
    \STATE \textbf{Estimation and Elimination:}
    \STATE Compute estimator: \(\hat{\theta}_{\ell} \gets \big(\Bb + \log(n_\ell/\delta_\ell)\, \Ib_d\big)^{-1} \bb\).
    \STATE Identify empirical best: \(\hat{z}_\ell \in \arg\max_{ z \in \mathcal{A}_\ell} z^\top \hat{\theta}_{\ell}\).
    \STATE Update active set: \(\mathcal{A}_{\ell+1} \gets \{\,  z \in \mathcal{A}_\ell \mid (\hat{z}_\ell-z)^\top \hat{\theta}_{\ell} < \varepsilon_\ell \,\}\).
    \STATE Increment phase: \(\ell \gets \ell+1\).
\ENDWHILE
\STATE \textbf{Output:} The single arm remaining in \(\mathcal{A}_\ell\).
\end{algorithmic}
\end{algorithm}

\paragraph{Computational Efficiency.}
Each phase of Algorithm~\ref{algorithm; transductive fixed confidence BAI} requires two convex optimization subroutines: a linear \(\cX\cY\)-design on the shifted instance (to obtain \(\pb_{\xor}\)) and an approximate G-optimal design (to obtain \(\pb_G\)). Both can be solved efficiently, for example by Frank-Wolfe or multiplicative-weights methods, so the overall procedure is computationally practical.

\section{Upper Bounds and Near-Optimality}
\label{section; upper bound}

We now state the sample-complexity guarantee for Algorithm~\ref{algorithm; transductive fixed confidence BAI}.

\begin{theorem}[High-probability sample complexity bound]
\label{theorem; transductive upper bound}
Under Assumption~\ref{assumption; boundedness}, with probability at least \(1-\delta\), Algorithm~\ref{algorithm; transductive fixed confidence BAI} correctly identifies the best target arm \(z^\star\), and its stopping time satisfies
\begin{align*}
\bm\tau(\delta) \lesssim \bm\tau^\star_{\operatorname{lin}}(\cZ : \cX-x_1) + d^2,
\end{align*}
where the notation \(\lesssim\) suppresses absolute constants and logarithmic factors.
\end{theorem}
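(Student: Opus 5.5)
The argument follows the standard phase-elimination analysis of \citet{fiez2019sequential}, with the orthogonalized-regression concentration bound \eqref{equation; concentration orthogonalized regression} in place of the linear one.

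\textbf{Good event and correctness.} In phase \(\ell\), for every pair \(u,u'\in\cA_\ell\), take \(y=u-u'\) in \eqref{equation; concentration orthogonalized regression} with \(L=\mathcal{V}_{\operatorname{cov}}(\cA_\ell:\cX,\pb_\ell)\). For \(M\), note that \(\bSigma_{\operatorname{cov},\pb_\ell}\succeq \frac12\bSigma_{\operatorname{cov},\pb_G}\). This holds because covariance is concave in the mixture: the mixture covariance equals the average of the component covariances plus a PSD between-component term. Combining this with \(\|x_i-\bar x_{\pb_G}\|^2_{\bSigma^{-1}_{\operatorname{cov},\pb_G}}\le 4\max_x x^\top\bSigma_{\operatorname{cov},\pb_G}^{-1}x\lesssim d\) via \eqref{equation; p_G}, and accounting for recentering at \(\bar x_{\pb_\ell}\), gives \(M\le 32d\). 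This is exactly the constant used in \eqref{equation; n_ell}. By \eqref{equation; sample complexity orthogonalized regression} and a union bound over pairs and phases (using \(\delta_\ell\) and \(\sum_\ell 1/(\ell(\ell+1))\le1\)), with probability \(1-\delta\) every contrast in every phase is estimated to accuracy \(\varepsilon_\ell/2\); I would use \(\varepsilon_\ell/2\) by adjusting constants. On this event, \(z^\star\) is never eliminated: \((\hat z_\ell-z^\star)^\top\hat\theta_\ell\le \varepsilon_\ell/2<\varepsilon_\ell\). Moreover, every surviving \(z\) satisfies \(\Delta_z:=(z^\star-z)^\top\theta^\star\le 2\varepsilon_\ell\), since \((z^\star-z)^\top\hat\theta_\ell \ge \Delta_z-\varepsilon_\ell/2\) while survival forces \((z^\star-z)^\top\hat\theta_\ell<\varepsilon_\ell\). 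Hence \(\cA_{\ell+1}\subseteq S_{\ell}:=\{z:\Delta_z< 2\varepsilon_{\ell}\}\), and the algorithm stops by phase \(\lceil\log_2(4/\Delta_{\min})\rceil\), returning \(z^\star\).

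\textbf{Per-phase cost via XOR.} Since \(\bSigma_{\operatorname{cov},\pb_\ell}\succeq\frac12\bSigma_{\operatorname{cov},\pb_{\operatorname{xor}}}\) by the same concavity, Proposition~\ref{proposition; performance of XY design} gives
\[\mathcal{V}_{\operatorname{cov}}(\cA_\ell:\cX,\pb_\ell)\le 8\,\mathcal{V}^\star_{\operatorname{lin}}(\cA_\ell:\cX-x_1)\le 8\,\mathcal{V}^\star_{\operatorname{lin}}(S_{\ell-1}:\cX-x_1).\]
Then I would relate this to the benchmark as in \citet{fiez2019sequential}. Let \(\pb^\star\) attain \(\bm\tau^\star_{\operatorname{lin}}(\cZ:\cX-x_1)\). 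For \(u,u'\in S_{\ell-1}\), write \(u-u'=(z^\star-u')-(z^\star-u)\), so
\[\|u-u'\|^2_{\bSigma_{-1,\pb^\star}^{-1}}\le 2\max_{z\in S_{\ell-1}}\|z^\star-z\|^2_{\bSigma_{-1,\pb^\star}^{-1}}\le 2\bm\tau^\star_{\operatorname{lin}}\cdot(4\varepsilon_{\ell-1})^2.\]
Thus \(\mathcal{V}_{\operatorname{cov}}/\varepsilon_\ell^2\lesssim\bm\tau^\star_{\operatorname{lin}}(\cZ:\cX-x_1)\) in every phase. Summing over the \(O(\log(1/\Delta_{\min}))\) phases, the first term of \(n_\ell\) contributes \(\tilde{\mathcal O}(\bm\tau^\star_{\operatorname{lin}})\) after absorbing the logarithms.

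\textbf{Lower-order term (main obstacle).} The second term of \(n_\ell\) is \(d\sqrt{\mathcal V}/\varepsilon_\ell\lesssim d\sqrt{\bm\tau^\star_{\operatorname{lin}}}\le \frac12(d^2+\bm\tau^\star_{\operatorname{lin}})\) by AM--GM. Summed over logarithmically many phases, this yields the additive \(d^2\) term; the ceilings contribute only \(O(\log)\) per phase. The delicate points are, first, justifying the \(M\le 32d\) bound under recentering at the mixture mean rather than at \(\bar x_{\pb_G}\), where I expect the concavity/PSD-ordering argument to carry through with the factor \(2\) losses; and second, verifying that \(\mathcal V^\star_{\operatorname{lin}}\) on the surviving set is finite whenever \(\bm\tau^\star_{\operatorname{lin}}<\infty\), which follows from the span condition encoded in the generalized inverse-norm convention.
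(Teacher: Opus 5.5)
Your proposal is correct and follows essentially the same route as the paper's proof. The shared steps are: the good event via the orthogonalized-regression bound, with \(L\) controlled through \(\bSigma_{\operatorname{cov},\pb_\ell}\succeq\frac12\bSigma_{\operatorname{cov},\pb_{\operatorname{xor}}(\cA_\ell)}\) and Proposition~\ref{proposition; performance of XY design}; \(M\le 32d\) from the \(\pb_G\) component; the standard elimination argument for correctness; the reduction \(\mathcal{V}^\star_{\operatorname{lin}}(\cA_\ell:\cX-x_1)\lesssim \bm\tau^\star_{\operatorname{lin}}(\cZ:\cX-x_1)\,\varepsilon_\ell^2\), obtained by rewriting pairwise contrasts against \(z^\star\) and evaluating at the benchmark design; and AM--GM to turn \(d\sqrt{\bm\tau^\star_{\operatorname{lin}}}\) into the additive \(d^2\).

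There are two harmless slips. First, the pairwise bound needs a factor \(4\) rather than \(2\), since \(\|a-b\|^2\le 2\|a\|^2+2\|b\|^2\) and \(a=-b\) is possible. Second, the recentering you flag as delicate is immediate: \(\bar x_{\pb_\ell}\) is a convex combination of the \(x_j\), so \(\|x_i-\bar x_{\pb_\ell}\|_{\bSigma_{\operatorname{cov},\pb_G}^{-1}}\le 2\sqrt{4d}\) by the triangle inequality, exactly as in the paper.
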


\paragraph{Discussion.}
The theorem shows that, whenever the shifted-linear benchmark is finite, our algorithm attains a high-probability sample-complexity upper bound of order \(\bm\tau^\star_{\operatorname{lin}}(\cZ : \cX-x_1)+d^2\) up to logarithmic factors. Combined with Proposition~\ref{proposition; transductive lower bound}, this shows that \(\bm\tau^\star_{\operatorname{lin}}(\cZ : \cX-x_1)\) is unavoidable in general and therefore the right benchmark to compare against.

The additive \(d^2\) term comes from stability requirements for orthogonalized regression. Importantly, it is independent of the instance-specific gaps. On the hard instances that dominate fixed-confidence complexity, the leading term \(\bm\tau^\star_{\operatorname{lin}}(\cZ : \cX-x_1)\) therefore governs the sample complexity.

The standard non-transductive guarantee is recovered immediately by taking \(\cZ=\cX\).
\begin{corollary}[Sample complexity: non-transductive case]
By setting \(\cZ=\cX\) in Theorem~\ref{theorem; transductive upper bound}, we recover the guarantee for the standard (non-transductive) setting:
\begin{align*}
\bm\tau(\delta) \lesssim \bm{\tau}_{\operatorname{lin}}^\star(\cX: \cX-x_{1}) + d^2 = \bm{\tau}_{\operatorname{lin}}^\star(\cX-x_{1}) + d^2,
\end{align*}
where the notation \(\lesssim\) suppresses absolute constants and logarithmic factors.
\end{corollary}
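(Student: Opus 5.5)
The corollary follows from Theorem~\ref{theorem; transductive upper bound} once we check that nothing in that theorem rules out $\cZ=\cX$, and that the right-hand side reduces to the shifted benchmark. The plan has two steps.

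\emph{Step 1: the hypotheses specialize.} Taking $\cZ:=\cX$ (so $H=K$ and $z_h=x_h$) gives a valid transductive instance. Assumption~\ref{assumption; boundedness} holds, since $\|z_h\|_2=\|x_h\|_2\le 1$. The uniqueness of $z^\star=\arg\max_{x\in\cX}x^\top\theta^\star$ is exactly the standard uniqueness assumption on the best source arm. Algorithm~\ref{algorithm; transductive fixed confidence BAI} run with input $\cZ=\cX$ is then a legitimate non-transductive procedure: it initializes $\cA_1=\cX$ and samples from $\cX$. Theorem~\ref{theorem; transductive upper bound} therefore applies verbatim. With probability at least $1-\delta$ the algorithm returns $x^\star$, and $\bm\tau(\delta)\lesssim \bm\tau^\star_{\operatorname{lin}}(\cX:\cX-x_1)+d^2$ up to constants and logarithmic factors.

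\emph{Step 2: the equality.} Here we only need to unwind notation. By the shorthand introduced after \eqref{equation; lower bound linear bandit}, $\bm\tau^\star_{\operatorname{lin}}(\cX-x_1)$ is defined as $\bm\tau^\star_{\operatorname{lin}}(\cX:\cX-x_1)$. So the equality holds by definition, with target set $\cX$ and source features $\cX-x_1$.

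As a consistency check, we can also write the quantity out via \eqref{equation; lower bound transductive linear bandit shifted} with $\cZ=\cX$:
\[
\min_{\pb}\max_{x\ne x^\star}\frac{\|x^\star-x\|^2_{\bSigma_{-1,\pb}^{-1}}}{|(x^\star-x)^\top\theta^\star|^2}.
\]
The contrasts $x^\star-x$ coincide with $(x^\star-x_1)-(x-x_1)$, and the gaps are unchanged by the shift. This formula is therefore also the usual non-transductive benchmark on the shifted arm set, which justifies calling it $\bm\tau^\star_{\operatorname{lin}}(\cX-x_1)$. The best arm is the same under the shift, since subtracting $x_1$ lowers every arm's value by the same constant $x_1^\top\theta^\star$.

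No real obstacle arises. The only point to watch is that the ``target'' in the shorthand is the unshifted $\cX$, while the design uses $\cX-x_1$; the contrast identity above resolves this.
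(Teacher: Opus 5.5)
Your proposal is correct and follows the paper's route exactly: the corollary is a direct instantiation of Theorem~\ref{theorem; transductive upper bound} with $\cZ=\cX$, and the displayed equality is just the shorthand $\bm\tau_{\operatorname{lin}}^\star(\cX-x_1) := \bm\tau_{\operatorname{lin}}^\star(\cX:\cX-x_1)$ introduced in Section~\ref{section; background}. Your extra check, that this also equals the benchmark with shifted target set $\cX-x_1$ because contrasts and gaps are shift-invariant, is correct but not needed.
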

Thus, in both the transductive and standard settings, semiparametric BAI has essentially the same instance-dependent complexity landscape as linear-bandit BAI on the shifted feature space \(\cX - x_1\).

\paragraph{Normalization and General Scales.}
For clarity we present the normalized case \(|\nu_t| \le 1\) with unit sub-Gaussian noise. More generally, if \(|\nu_t| \le R\) and \(\eta_t\) is \(\alpha\)-sub-Gaussian, rescaling rewards by \(1/(R+\alpha)\) reduces to this setting, so the confidence width scales with \(R+\alpha\) and the sample-complexity bound by \((R+\alpha)^2\); see also \citet{pmlr-v291-kim25a}.

\section{Experiments with Synthetic Data}

\subsection{Two Feature Instances}
We consider two synthetic instance families in the standard non-transductive setting \(\cZ=\cX\).
In all synthetic experiments, the reward is generated according to the semiparametric model
\[
r_t = x_{a_t}^\top \theta^\star + \nu_t + \eta_t,
\]
where \(\eta_t \sim \cN(0,1)\), \(\nu_t = 1 + \sin(2t)\) as in \citet{pmlr-v291-kim25a}. All reported numbers are averages over \(100\) independent runs. Unless stated otherwise, we use \(\delta=0.1\) throughout, including the real-data experiments below.

\textbf{Feature set 1: small-gap instance.}\;
We set \(\cX = \{e_1, \dots, e_d, z\}\) with \(z = \cos(\alpha)\, e_1 + \sin(\alpha)\, e_2\).
We also set \(\theta^\star = 2e_1\).
This type of instance is standard in linear-bandit BAI experiments \citep{fiez2019sequential}. We set \(\alpha = 0.2\), which yields a small suboptimality gap \(\Delta_{2} \approx 0.04\).
For this family, we report results for \(d = 10\) with \(\delta = 0.1\).

\textbf{Feature set 2: uniform-feature instance.}\;
For the second family, we sample \(\cX\) with \(|\cX| = K\) uniformly from the unit sphere \(\mathbb{S}^{d-1}\) and again set \(\theta^\star = 2e_1\). We consider \(d=10\), \(K=100\), and \(\delta=0.1\). Detailed standard deviations are reported in the appendix.

\subsection{Baselines and Hyperparameters}
We compare \texttt{SP-BAI} with two semiparametric baselines and one misspecified linear-bandit reference: \texttt{SBE} \citep{pmlr-v291-kim25a}, \texttt{G-Opt}, and \texttt{RAGE} \citep{fiez2019sequential}.

\textbf{SBE.}\;
This is the arm-elimination procedure of \citet{pmlr-v291-kim25a}, which uses G-optimal design for orthogonalized regression.
All other implementation details follow \citet{pmlr-v291-kim25a}.

\textbf{G-Opt.}\;
This oracle one-shot baseline assumes the smallest nonzero gap \(\Delta_2\) is known. We compute an approximate G-optimal design over \(\cX\) using the semiparametric design routine of \citet{pmlr-v291-kim25a}, sample according to that design for the corresponding one-shot budget from their fixed-confidence analysis, then fit orthogonalized regression and output \(\arg\max_x x^\top \hat{\theta}\).

\textbf{RAGE.}\;
RAGE \citep{fiez2019sequential} is a nearly optimal algorithm for linear-bandit BAI, but it does not model the round-wise shared baseline shift. We therefore include it only as a misspecified reference point.

\textbf{Hyperparameters.}\;
We use the same constants \(R_1, R_2\) in \cref{equation; sample complexity orthogonalized regression} throughout and set \(R_1 = R_2 = 1/3\) in every experiment. The confidence level is \(\delta=0.1\) in every experiment. In our implementation of \texttt{SP-BAI}, the anchor arm used inside the \texttt{XOR} design is chosen phase by phase as the current empirical best from the previous phase. Since the theory allows any anchor up to constant factors, this choice is simply a practical heuristic.

\subsection{Simulation Results}
Table~\ref{table; small gap d=10} reports the small-gap result. On this instance, \texttt{SP-BAI} achieves the smallest average stopping time among the semiparametric methods while maintaining zero empirical error over \(100\) runs. By contrast, \texttt{RAGE} stops much earlier at small \(\sigma\) but fails completely, confirming that the linear model is badly misspecified in this setting.

Table~\ref{table; uniform feature K=100} gives a complementary random uniform-feature instance. This example highlights calibration rather than raw stopping time. The oracle one-shot \texttt{G-Opt} baseline is fastest on average, but its error probability rises to \(0.29\). \texttt{SP-BAI} is more conservative, yet it achieves the smallest empirical error (\(0.01\)) among all reported methods.
\begin{table}[t]
\centering
\caption{Small-gap instance, \(d=10\) (100 runs).}
\label{table; small gap d=10}
\small
\begin{tabular}{l S[table-format=7.2] S[table-format=1.4]}
\toprule
& \textbf{Avg. $\bm\tau$} & \textbf{Avg.\ Error Prob.} \\
\midrule
\texttt{SBE} & 1881513.94 & 0.0000 \\
\texttt{G-Opt} & 387106.00 & 0.0000 \\
\textbf{\texttt{SP-BAI}} & \textbf{187924.68} & \textbf{0.0000} \\
\texttt{RAGE} (\(\sigma=1\)) & 22212.00 & 1.0000 \\
\texttt{RAGE} (\(\sigma=3\)) & 199908.00 & 1.0000 \\
\bottomrule
\end{tabular}
\end{table}

\begin{table}[t]
\centering
\caption{Uniform-feature instance, \(d=10\), \(K=100\) (100 runs).}
\label{table; uniform feature K=100}
\small
\begin{tabular}{l S[table-format=7.2] S[table-format=1.4]}
\toprule
& \textbf{Avg.\ $\bm\tau$} & \textbf{Avg.\ Error Prob.} \\
\midrule
\texttt{SBE} & 1523534.64 & 0.1000 \\
\texttt{G-Opt} & 323509.88 & 0.2900 \\
\textbf{\texttt{SP-BAI}} & \textbf{2038886.49} & \textbf{0.0100} \\
\texttt{RAGE} (\(\sigma=1\)) & 4367940.79 & 0.9500 \\
\texttt{RAGE} (\(\sigma=3\)) & 6065302.82 & 0.9400 \\
\bottomrule
\end{tabular}
\end{table}


\section{Real-world Data: Jester Joke Ratings}
\label{section; real_world_jester}

We next test the practical utility of our method on the \emph{Jester} dataset \citep{goldberg2001eigentaste}, which contains continuous user ratings for jokes. Jester exhibits substantial \emph{user-specific baseline variation}: some users systematically rate most jokes high, while others rate most jokes low, regardless of the joke identity.

\paragraph{Experimental Setup.}
We focus on \(K=8\) jokes, namely \(\texttt{ARM\_IDS}=\{7,8,13,15,16,17,18,19\}\). For this joke set, we restrict attention to the maximal complete-case subset, consisting of the \(50{,}699\) users who rated all eight jokes. This lets each pull correspond to sampling a user and then querying any joke without introducing an additional missing-data model. Since the action set is finite, we use one-hot features \(x_i=e_i\). Each pull samples a user uniformly with replacement from these \(50{,}699\) users and observes the corresponding joke rating. The ground-truth best arm, Joke \#19, is defined by the empirical mean over the full subset. As in the synthetic experiments, we use \(R_1=R_2=1/3\) and \(\delta=0.1\).

\subsection{Justification of Semiparametric Reward Model: Impact of Baseline Correction}
\label{subsection; jester_deo}

To isolate the value of baseline correction, we first compare a \textbf{Uniform} sampling strategy with \textbf{DEO} (the approximate G-optimal design method for semiparametric bandits from \citealt{pmlr-v291-kim25a}) in a single-phase setting. Both methods collect a fixed budget of samples and then rank arms by their estimated values, but DEO uses orthogonalized regression to remove the user baseline \(\nu_t\). We report \(100\) runs at budgets \(T \in \{3000,5000\}\).

\textbf{Results.}
Accounting for the user baseline matters already in this simple ranking task. As shown in Table~\ref{tab:jester_toy}, DEO identifies the best joke in \(65\%\) of runs at budget \(3000\), versus \(58\%\) for uniform sampling, and the gap widens to \(69\%\) versus \(58\%\) at budget \(5000\). The naive estimator absorbs the large variance of \(\nu_t\) into the arm means, making it harder to separate the two leading jokes, 19 and 17.

\begin{table}[h]
    \centering
    \renewcommand{\arraystretch}{1.05}
    \caption{Jester toy ranking: probability of ranking Joke \#19 first over 100 runs.}
    \label{tab:jester_toy}
    \begin{tabular}{lcc}
        \toprule
        \textbf{Budget} & \textbf{DEO} & \textbf{Uniform} \\
        \midrule
        \(3000\) & \textbf{0.65} & 0.58 \\
        \(5000\) & \textbf{0.69} & 0.58 \\
        \bottomrule
    \end{tabular}
\end{table}

\subsection{Fixed-confidence BAI on Jester}
\label{subsection; jester_bai}

We next evaluate fixed-confidence BAI on Jester. We compare our \texttt{SP-BAI} algorithm against (i) \texttt{SBE} \citep{pmlr-v291-kim25a}, which uses approximate G-optimal design but not our \(\cX\cY\)-optimization, and (ii) standard MAB baselines following \citet{jamieson2014best}, namely \texttt{LUCB} and \texttt{AE}, run over the noise-proxy grid \(\sigma \in \{1,1.5,2,2.5,3,3.5,4\}\). In the main table, for each MAB baseline we report the \emph{smallest-sample} choice among those whose empirical error probability is below \(10\%\); the full grid is deferred to Appendix~\ref{section; more experiments}.

\textbf{Results.}
Table~\ref{tab:jester_bai} reports averages over 100 runs. \texttt{SP-BAI} achieves a \textbf{97\% success rate} with only \textbf{61,546} average pulls, while \texttt{SBE} attains \textbf{98\%} success but requires \textbf{266,706} pulls, about \(4.3\times\) more samples. Under the ``error \(<0.1\)'' rule, the best standard MAB baselines are \texttt{LUCB} with \(\sigma=3.5\) and \texttt{AE} with \(\sigma=1.5\), and both remain less sample-efficient than \texttt{SP-BAI}.

\begin{table}[t]
    \centering
    \renewcommand{\arraystretch}{1.1}
    \caption{Jester (fixed-confidence BAI): Success probability and sample complexity over 100 runs. For \texttt{LUCB} and \texttt{AE}, we report the smallest-sample choice among \(\sigma\)-values whose empirical error probability is below \(10\%\); the full grid appears in the appendix.}
    \label{tab:jester_bai}
    \vspace{0.25em}
    \begin{tabular}{lcc}
        \toprule
        \textbf{Method} & \textbf{Success Prob.} & \textbf{Avg. Pulls} \\
        \textbf{\texttt{SP-BAI} (Ours)} & \textbf{0.97} & \textbf{\num{61546}} \\
        \texttt{SBE} & 0.98 & \num{266706} \\
        \midrule
        \texttt{LUCB} ($\sigma=3.5$) & 0.96 & \num{136698} \\
        \texttt{AE} ($\sigma=1.5$) & 0.98 & \num{105939} \\
        \bottomrule
    \end{tabular}
\end{table}

\section{Conclusion}
We studied fixed-confidence BAI in semiparametric bandits, where rewards contain an arbitrary baseline shift in addition to a linear treatment effect. We identified the correct shifted-linear complexity benchmark, developed a new \(\cX\cY\)-design for orthogonalized regression, and proved a high-probability sample-complexity upper bound controlled by that benchmark up to logarithmic factors and an additive \(d^2\) term.
Two natural directions for future work are fixed-budget BAI in semiparametric bandits and top-\(m\) identification under the fixed-confidence criterion.

\bibliography{ref}

@inproceedings{tao2018best,
  title={Best arm identification in linear bandits with linear dimension dependency},
  author={Tao, Chao and Blanco, Sa{\'u}l and Zhou, Yuan},
  booktitle={International Conference on Machine Learning},
  pages={4877--4886},
  year={2018},
  organization={PMLR}
}

@inproceedings{abbasi2018best,
  title={Best of both worlds: Stochastic \& adversarial best-arm identification},
  author={Abbasi-Yadkori, Yasin and Bartlett, Peter and Gabillon, Victor and Malek, Alan and Valko, Michal},
  booktitle={Conference on learning theory},
  pages={918--949},
  year={2018},
  organization={PMLR}
}

@article{goldberg2001eigentaste,
  title={Eigentaste: A constant time collaborative filtering algorithm},
  author={Goldberg, Ken and Roeder, Theresa and Gupta, Dhruv and Perkins, Chris},
  journal={information retrieval},
  volume={4},
  number={2},
  pages={133--151},
  year={2001},
  publisher={Springer}
}

@article{kaufmann2016complexity,
  title={On the complexity of best-arm identification in multi-armed bandit models},
  author={Kaufmann, Emilie and Capp{\'e}, Olivier and Garivier, Aur{\'e}lien},
  journal={The Journal of Machine Learning Research},
  volume={17},
  number={1},
  pages={1--42},
  year={2016},
  publisher={JMLR. org}
}

@article{rusmevichientong2010linearly,
  title={Linearly parameterized bandits},
  author={Rusmevichientong, Paat and Tsitsiklis, John N},
  journal={Mathematics of Operations Research},
  volume={35},
  number={2},
  pages={395--411},
  year={2010},
  publisher={INFORMS}
}

@inproceedings{jamieson2014best,
  title={Best-arm identification algorithms for multi-armed bandits in the fixed confidence setting},
  author={Jamieson, Kevin and Nowak, Robert},
  booktitle={2014 48th annual conference on information sciences and systems (CISS)},
  pages={1--6},
  year={2014},
  organization={IEEE}
}

@article{langford2007epoch,
  title={The epoch-greedy algorithm for contextual multi-armed bandits},
  author={Langford, John and Zhang, Tong},
  journal={Advances in neural information processing systems},
  volume={20},
  number={1},
  pages={96--1},
  year={2007},
  publisher={Citeseer}
}

@article{wen2025joint,
  title={Joint Value Estimation and Bidding in Repeated First-Price Auctions},
  author={Wen, Yuxiao and Han, Yanjun and Zhou, Zhengyuan},
  journal={arXiv preprint arXiv:2502.17292},
  year={2025}
}

@article{kennedy2024minimax,
  title={Minimax rates for heterogeneous causal effect estimation},
  author={Kennedy, Edward H and Balakrishnan, Sivaraman and Robins, James M and Wasserman, Larry},
  journal={Annals of statistics},
  volume={52},
  number={2},
  pages={793},
  year={2024}
}

@article{kennedy2023towards,
  title={Towards optimal doubly robust estimation of heterogeneous causal effects},
  author={Kennedy, Edward H},
  journal={Electronic Journal of Statistics},
  volume={17},
  number={2},
  pages={3008--3049},
  year={2023},
  publisher={The Institute of Mathematical Statistics and the Bernoulli Society}
}

@article{greenewald2017action,
  title={Action centered contextual bandits},
  author={Greenewald, Kristjan and Tewari, Ambuj and Murphy, Susan and Klasnja, Predrag},
  journal={Advances in neural information processing systems},
  volume={30},
  year={2017}
}

@article{yang2022minimax,
  title={Minimax optimal fixed-budget best arm identification in linear bandits},
  author={Yang, Junwen and Tan, Vincent},
  journal={Advances in Neural Information Processing Systems},
  volume={35},
  pages={12253--12266},
  year={2022}
}

@article{choi2023semi,
  title={Semi-parametric contextual bandits with graph-Laplacian regularization},
  author={Choi, Young-Geun and Kim, Gi-Soo and Paik, Seunghoon and Paik, Myunghee Cho},
  journal={Information Sciences},
  volume={645},
  pages={119367},
  year={2023},
  publisher={Elsevier}
}

@inproceedings{degenne2020gamification,
  title={Gamification of pure exploration for linear bandits},
  author={Degenne, R{\'e}my and M{\'e}nard, Pierre and Shang, Xuedong and Valko, Michal},
  booktitle={International Conference on Machine Learning},
  pages={2432--2442},
  year={2020},
  organization={PMLR}
}

@inproceedings{xu2018fully,
  title={A fully adaptive algorithm for pure exploration in linear bandits},
  author={Xu, Liyuan and Honda, Junya and Sugiyama, Masashi},
  booktitle={International Conference on Artificial Intelligence and Statistics},
  pages={843--851},
  year={2018},
  organization={PMLR}
}

@article{fiez2019sequential,
  title={Sequential experimental design for transductive linear bandits},
  author={Fiez, Tanner and Jain, Lalit and Jamieson, Kevin G and Ratliff, Lillian},
  journal={Advances in neural information processing systems},
  volume={32},
  year={2019}
}

@article{soare2014best,
  title={Best-arm identification in linear bandits},
  author={Soare, Marta and Lazaric, Alessandro and Munos, R{\'e}mi},
  journal={Advances in Neural Information Processing Systems},
  volume={27},
  year={2014}
}

@article{jedra2020optimal,
  title={Optimal best-arm identification in linear bandits},
  author={Jedra, Yassir and Proutiere, Alexandre},
  journal={Advances in Neural Information Processing Systems},
  volume={33},
  pages={10007--10017},
  year={2020}
}

@inproceedings{krishnamurthy2018semiparametric,
  title={Semiparametric contextual bandits},
  author={Krishnamurthy, Akshay and Wu, Zhiwei Steven and Syrgkanis, Vasilis},
  booktitle={International Conference on Machine Learning},
  pages={2776--2785},
  year={2018},
  organization={PMLR}
}

@inproceedings{kim2019contextual,
  title={Contextual multi-armed bandit algorithm for semiparametric reward model},
  author={Kim, Gi-Soo and Paik, Myunghee Cho},
  booktitle={International Conference on Machine Learning},
  pages={3389--3397},
  year={2019},
  organization={PMLR}
}

@article{abbasi2011improved,
  title={Improved algorithms for linear stochastic bandits},
  author={Abbasi-Yadkori, Yasin and P{\'a}l, D{\'a}vid and Szepesv{\'a}ri, Csaba},
  journal={Advances in neural information processing systems},
  volume={24},
  year={2011}
}

@article{goldenshluger2013linear,
  title={A linear response bandit problem},
  author={Goldenshluger, Alexander and Zeevi, Assaf},
  journal={Stochastic Systems},
  volume={3},
  number={1},
  pages={230--261},
  year={2013},
  publisher={INFORMS}
}

@article{auer2002finite,
  title={Finite-time analysis of the multiarmed bandit problem},
  author={Auer, Peter and Cesa-Bianchi, Nicolo and Fischer, Paul},
  journal={Machine learning},
  volume={47},
  number={2-3},
  pages={235--256},
  year={2002},
  publisher={Springer}
}

@book{LS19bandit-book,
  title = {Bandit Algorithms},
  author = {Lattimore, Tor and Szepesv\'{a}ri, Csaba},
  year = {2019},
  publisher = {Cambridge University Press (preprint)},
}

@InProceedings{pmlr-v291-kim25a,
  title = 	 {Experimental Design for Semiparametric Bandits},
  author =       {Kim, Seok-Jin and Kim, Gi-Soo and Oh, Min-hwan},
  booktitle = 	 {Proceedings of Thirty Eighth Conference on Learning Theory},
  pages = 	 {3215--3252},
  year = 	 {2025},
  editor = 	 {Haghtalab, Nika and Moitra, Ankur},
  volume = 	 {291},
  series = 	 {Proceedings of Machine Learning Research},
  month = 	 {30 Jun--04 Jul},
  publisher =    {PMLR},
  url = 	 {https://proceedings.mlr.press/v291/kim25a.html}
}

@inproceedings{li2019nearly,
  title={Nearly minimax-optimal regret for linearly parameterized bandits},
  author={Li, Yingkai and Wang, Yining and Zhou, Yuan},
  booktitle={Conference on Learning Theory},
  pages={2173--2174},
  year={2019},
  organization={PMLR}
}

@inproceedings{lykouris2018corruptions,
  title={Stochastic Bandits Robust to Adversarial Corruptions},
  author={Lykouris, Thodoris and Mirrokni, Vahab and Paes Leme, Renato},
  booktitle={Proceedings of the 50th Annual ACM SIGACT Symposium on Theory of Computing},
  pages={114--122},
  year={2018},
  publisher={ACM}
}

@InProceedings{pmlr-v139-zhong21a,
  title = {Probabilistic Sequential Shrinking: A Best Arm Identification Algorithm for Stochastic Bandits with Corruptions},
  author = {Zhong, Zixin and Cheung, Wang Chi and Tan, Vincent},
  booktitle = {Proceedings of the 38th International Conference on Machine Learning},
  pages = {12772--12781},
  year = {2021},
  editor = {Meila, Marina and Zhang, Tong},
  volume = {139},
  series = {Proceedings of Machine Learning Research},
  publisher = {PMLR},
  url = {https://proceedings.mlr.press/v139/zhong21a.html}
}





\section*{Checklist}

\begin{enumerate}

\item For all models and algorithms presented, check if you include:
\begin{enumerate}
\item A clear description of the mathematical setting, assumptions, algorithm, and/or model. [Yes]
\item An analysis of the properties and complexity (time, space, sample size) of any algorithm. [Yes]
\item (Optional) Anonymized source code, with specification of all dependencies, including external libraries. [Yes]
\end{enumerate}

\item For any theoretical claim, check if you include:
\begin{enumerate}
\item Statements of the full set of assumptions of all theoretical results. [Yes]
\item Complete proofs of all theoretical results. [Yes]
\item Clear explanations of any assumptions. [Yes]     
\end{enumerate}

\item For all figures and tables that present empirical results, check if you include:
\begin{enumerate}
\item The code, data, and instructions needed to reproduce the main experimental results (either in the supplemental material or as a URL). [Yes]
\item All the training details (e.g., data splits, hyperparameters, how they were chosen). [Yes]
\item A clear definition of the specific measure or statistics and error bars (e.g., with respect to the random seed after running experiments multiple times). [Yes]
\item A description of the computing infrastructure used. (e.g., type of GPUs, internal cluster, or cloud provider). [Yes]
\end{enumerate}

\item If you are using existing assets (e.g., code, data, models) or curating/releasing new assets, check if you include:
\begin{enumerate}
\item Citations of the creator If your work uses existing assets. [Yes]
\item The license information of the assets, if applicable. [Yes]
\item New assets either in the supplemental material or as a URL, if applicable. [Yes]
\item Information about consent from data providers/curators. [Yes]
\item Discussion of sensible content if applicable, e.g., personally identifiable information or offensive content. [Not Applicable]
\end{enumerate}

\item If you used crowdsourcing or conducted research with human subjects, check if you include:
\begin{enumerate}
\item The full text of instructions given to participants and screenshots. [Not Applicable]
\item Descriptions of potential participant risks, with links to Institutional Review Board (IRB) approvals if applicable. [Not Applicable]
\item The estimated hourly wage paid to participants and the total amount spent on participant compensation. [Not Applicable]
\end{enumerate}

\end{enumerate}

\newpage
\appendix
\onecolumn

\aistatstitle{Nearly Optimal Best Arm Identification for Semiparametric Bandits: Supplementary Materials}
\etocdepthtag.toc{mtappendix}
\etocsettagdepth{mtchapter}{none}
\etocsettagdepth{mtappendix}{subsection}
\tableofcontents

\section{Additional Discussion of Related Work}
\label{section; apdx related work}

\paragraph{Semiparametric Bandits}
The semiparametric reward model was introduced for contextual decision making, where the observed reward consists of a structured treatment effect plus an unrestricted baseline shift \citep{greenewald2017action,krishnamurthy2018semiparametric,kim2019contextual}. That literature primarily studies regret minimization or policy learning under changing contexts. The closest precursor to our work is the recent fixed-feature study of \citet{pmlr-v291-kim25a}. They developed the experimental-design viewpoint for semiparametric bandits, established sharp concentration for orthogonalized regression, and proposed a G-optimal-design-based procedure together with an initial BAI guarantee. Our work builds directly on that foundation, but addresses a question left open there: what is the correct instance-dependent benchmark for fixed-confidence BAI, and can it be attained?

\paragraph{Linear-Bandit BAI and Experimental Design}
The broader linear-bandit literature begins with linearly parameterized exploration and regret minimization \citep{rusmevichientong2010linearly,goldenshluger2013linear}. Within that literature, pure exploration and best-arm identification have developed into a distinct direction. Early fixed-confidence guarantees for linear-bandit BAI were established by \citet{soare2014best}, who already highlighted the role of design matrices and confidence ellipsoids. Subsequent works sharpened this picture in several directions: improved adaptivity and computational efficiency \citep{tao2018best,xu2018fully}, asymptotically or instance-optimal fixed-confidence procedures \citep{degenne2020gamification,jedra2020optimal}, fixed-budget minimax formulations \citep{yang2022minimax}, and a broader understanding that, unlike in classical MAB, the geometry of the feature vectors is the main determinant of statistical difficulty.

For our paper, the most relevant thread is transductive and instance-dependent experimental design for linear BAI. \citet{fiez2019sequential} showed that in the transductive setting the correct complexity is governed by the variances of pairwise target contrasts, which leads to \(\cX\cY\)-design rather than G-optimal design. This point is central for understanding our contribution. G-optimal design controls uniform prediction error over the sampling set \(\cX\), whereas BAI depends on the specific directions needed to separate the best candidate from its competitors. In particular, once elimination begins, only a small subset of contrast directions matters, so a design that is optimal for prediction need not be optimal for identification.

Our lower bound is expressed exactly through the shifted transductive linear instance \((\cZ : \cX-x_1)\), and our algorithm can therefore be viewed as importing the linear-bandit design philosophy into the semiparametric model. At the same time, the transfer is not formal. In linear bandits, the relevant covariance matrix is simply the second moment of the chosen design. In semiparametric bandits, orthogonalized regression instead works with centered covariance matrices that themselves depend on the policy through \(\bar x_{\pb}\). Thus, even though the right benchmark remains linear-bandit-like, the design problem and the analysis are genuinely new.

\paragraph{Corrupted and Adversarially Perturbed Bandits}
Another adjacent line of work studies stochastic bandits with adversarial corruptions \citep{lykouris2018corruptions}, corrupted best-arm identification \citep{pmlr-v139-zhong21a}, and best-arm identification in mixed stochastic-adversarial regimes \citep{abbasi2018best}. These models allow perturbations that are effectively arm-dependent and are typically quantified by a corruption budget \(C\) or an adversarial contamination level, with guarantees that deteriorate as the environment becomes less stochastic.

Our semiparametric model is structurally different. The baseline shift \(\nu_t\) is shared across all arms at round \(t\) and is chosen before the learner samples the current action. If one treated this shared shift as an arbitrary corruption, the induced corruption budget could be as large as order \(T\), since a nontrivial perturbation may appear on every round. The problem is tractable here not because the cumulative perturbation is small, but because the perturbation has a special common-component structure. Orthogonalized regression is designed precisely to cancel this shared term in conditional expectation, so the learner can still recover linear-bandit-type instance dependence even when \(\sum_{t=1}^T |\nu_t|\) is large. In this sense, corruption-robust methods are related but target a different benchmark: they provide generic robustness to unstructured contamination, whereas our algorithm exploits a specific semiparametric structure that those methods do not use.

\section{Proof of Proposition~\ref{proposition; transductive lower bound}}\label{section; apdx proof transductive lower bound}

\begin{proof}
The proof is based on an observation-law equivalence between a carefully chosen semiparametric hard instance and a shifted linear bandit instance.

Consider a semiparametric bandit environment defined by the source feature set \(\cX\), the target feature set \(\cZ\), the parameter \(\theta^\star\), and a deterministic shift sequence \(\nu_t \equiv -x_1^\top \theta^\star\) for all \(t \ge 1\).
First, we verify that this environment satisfies Assumption~\ref{assumption; boundedness}. Since \(\|x_1\|_2 \le 1\) and \(\|\theta^\star\|_2 \le 1\), we have \(|\nu_t| \le 1\), which is a valid shift sequence.

In this environment, the reward observed at round \(t\) after playing arm \(a_t \in [K]\) is
\begin{align*}
r_t &= x_{a_t}^\top \theta^\star + \nu_t + \eta_t \\
&= x_{a_t}^\top \theta^\star - x_1^\top \theta^\star + \eta_t \\
&= (x_{a_t} - x_1)^\top \theta^\star + \eta_t.
\end{align*}
Let \(\tilde{x}_i := x_i - x_1\) denote the shifted feature vectors. The resulting observation process is statistically identical to that of a linear bandit with source feature set \(\tilde{\cX} = \{\tilde{x}_1, \dots, \tilde{x}_K\}\) and parameter \(\theta^\star\).

Now consider the transductive identification problem. The goal is to identify
\begin{align*}
z^\star = \arg\max_{z \in \cZ} z^\top \theta^\star.
\end{align*}
Any \(\delta\)-correct algorithm \(\texttt{Alg}\) for the semiparametric bandit must identify \(z^\star\) with probability at least \(1-\delta\) for every admissible shift sequence \(\{\nu_t\}\). Therefore, \(\texttt{Alg}\) must in particular succeed on the hard instance where \(\nu_t \equiv -x_1^\top \theta^\star\).

Because the history distribution \(\{(a_s, r_s)\}_{s \le t}\) generated by the semiparametric model with \(\nu_t = -x_1^\top \theta^\star\) is identical to that of the linear bandit with source features \(\cX - x_1\), any stopping time \(\bm{\tau}\) and recommendation rule valid for the semiparametric problem are also valid for this linear bandit instance.

We invoke the standard information-theoretic lower bound for transductive linear bandit BAI. For a linear bandit with source features \(\tilde{\cX}\) and target features \(\cZ\), the expected sample complexity is lower-bounded by \(\bm{\tau}_{\operatorname{lin}}^\star(\cZ : \tilde{\cX}) \log(1/\delta)\) \citep{fiez2019sequential}, where:
\begin{align*}
\bm{\tau}_{\operatorname{lin}}^\star(\cZ : \tilde{\cX}) = \min_{\pb \in \Delta^{(K)}} \max_{z \in \cZ \setminus \{z^\star\}} \frac{\|z^\star - z\|^2_{(\sum_{i=1}^K p_i \tilde{x}_i \tilde{x}_i^\top)^{-1}}}{((z^\star - z)^\top \theta^\star)^2}.
\end{align*}
Substituting \(\tilde{x}_i = x_i - x_1\), we conclude that for any \(\delta\)-correct semiparametric algorithm:
\begin{align*}
\EE[\bm{\tau}(\delta)] \gtrsim \bm{\tau}_{\operatorname{lin}}^\star(\cZ : \cX - x_1) \log\left(\frac{1}{\delta}\right).
\end{align*}
\end{proof}

\section{Proof of Proposition~\ref{proposition; transductive compatibility}}\label{section; apdx proof transductive compatibility}

\begin{proof}
Without loss of generality, it suffices to prove
\[
\bm{\tau}_{\operatorname{lin}}^\star(\cZ : \cX-x_2)
\le 4\,\bm{\tau}_{\operatorname{lin}}^\star(\cZ : \cX-x_1).
\]
Recall that
\begin{align*}
\bSigma_{-k, \pb} := \sum_{i=1}^K p_i (x_i - x_k)(x_i - x_k)^\top.
\end{align*}
Fix any \(\pb \in \Delta^{(K)}\). Since
\[
x_i-x_1=(x_i-x_2)+(x_2-x_1),
\]
the matrix inequality \((a+b)(a+b)^\top \preceq 2aa^\top+2bb^\top\) gives
\begin{align*}
\bSigma_{-1,\pb}
&= \sum_{i=1}^K p_i (x_i - x_1)(x_i - x_1)^\top \\
&\preceq 2 \sum_{i=1}^K p_i (x_i - x_2)(x_i - x_2)^\top + 2 (x_1-x_2)(x_1-x_2)^\top \\
&= 2\bSigma_{-2,\pb} + 2 (x_1-x_2)(x_1-x_2)^\top .
\end{align*}

Now define a nonnegative weight vector \(\qb\) by
\begin{align*}
q_1 := 2+2p_1, \qquad q_i := 2p_i \;\; \text{for } i \ge 2.
\end{align*}
Then \(\sum_{i=1}^K q_i = 4\), and
\begin{align*}
\sum_{i=1}^K q_i (x_i-x_2)(x_i-x_2)^\top
= 2\bSigma_{-2,\pb} + 2 (x_1-x_2)(x_1-x_2)^\top.
\end{align*}
Hence, for every vector \(y\),
\begin{align*}
\|y\|^2_{\bSigma_{-1,\pb}^{-1}}
&\ge \left\|y\right\|^2_{\left(\sum_{i=1}^K q_i (x_i-x_2)(x_i-x_2)^\top\right)^{-1}}.
\end{align*}
Let \(\bar{\qb}:=\qb/4 \in \Delta^{(K)}\). By homogeneity of inverse norms,
\begin{align*}
\left\|y\right\|^2_{\left(\sum_{i=1}^K q_i (x_i-x_2)(x_i-x_2)^\top\right)^{-1}}
= \frac{1}{4}
\left\|y\right\|^2_{\left(\sum_{i=1}^K \bar q_i (x_i-x_2)(x_i-x_2)^\top\right)^{-1}} .
\end{align*}

Therefore, for this fixed \(\pb\),
\begin{align*}
\max_{z \in \cZ\setminus\{z^\star\}}
\frac{\|z^\star-z\|^2_{\bSigma_{-1,\pb}^{-1}}}{|(z^\star-z)^\top\theta^\star|^2}
&\ge \frac{1}{4}
\max_{z \in \cZ\setminus\{z^\star\}}
\frac{\|z^\star-z\|^2_{\left(\sum_{i=1}^K \bar q_i (x_i-x_2)(x_i-x_2)^\top\right)^{-1}}}{|(z^\star-z)^\top\theta^\star|^2} \\
&\ge \frac{1}{4}\bm{\tau}_{\operatorname{lin}}^\star(\cZ : \cX-x_2),
\end{align*}
where the second step uses only that \(\bar{\qb}\in\Delta^{(K)}\), so the value at \(\bar{\qb}\) is at least the minimum over all designs. Since this lower bound holds for every \(\pb\in\Delta^{(K)}\), taking the minimum over \(\pb\) yields
\begin{align*}
\bm{\tau}_{\operatorname{lin}}^\star(\cZ : \cX-x_1)
\ge \frac{1}{4}\bm{\tau}_{\operatorname{lin}}^\star(\cZ : \cX-x_2).
\end{align*}
Rearranging yields
\[
\bm{\tau}_{\operatorname{lin}}^\star(\cZ : \cX-x_2)
\le 4\,\bm{\tau}_{\operatorname{lin}}^\star(\cZ : \cX-x_1).
\]
\end{proof}

\section{Proof of Proposition~\ref{proposition; performance of XY design}}\label{section; apdx proof XY allocation}
\begin{proof}
Let \((\tilde{p}_1, \dots, \tilde p_K)\) be an optimal \(\cX\cY\)-design for the shifted linear instance with source features \(\cX-x_1\) and target set \(\cA\). By construction, \(\tilde p_1 =0\). Define
\[
\widetilde{\bSigma}
:=\sum_{i=1}^{K} \tilde p_i(x_i-x_1)(x_i-x_1)^\top
=\sum_{i=2}^{K} \tilde p_i(x_i-x_1)(x_i-x_1)^\top .
\]
Applying Lemma 9 from \citet{pmlr-v291-kim25a} to the mixture policy \(\pb_{\operatorname{xor}}(\cA)\), we obtain
\begin{align*}
\bSigma_{\operatorname{cov}, \pb_{\operatorname{xor}}(\cA)} \succeq \frac{1}{4}\widetilde{\bSigma}.
\end{align*}
Since inverse quadratic forms reverse the PSD order, for every contrast \(y\in\RR^d\),
\begin{align*}
\|y\|^2_{\bSigma_{\operatorname{cov}, \pb_{\operatorname{xor}}(\cA)}^{-1}}
\le 4\|y\|^2_{\widetilde{\bSigma}^{-1}}.
\end{align*}
Now take \(y=x-x'\) for arbitrary \(x,x' \in \cA\). Then
\begin{align*}
\| x-x'\|^2_{ \bSigma_{\operatorname{cov}, \pb_{\operatorname{xor}}(\cA)}^{-1}}
&\le 4\| x-x'\|^2_{\widetilde{\bSigma}^{-1}} \\
&= 4\| (x-x_1)-(x'-x_1)\|^2_{\widetilde{\bSigma}^{-1}}\\
&\le 4\mathcal{V}_{\operatorname{lin}}^\star(\cA: \cX-x_1),
\end{align*}
where the last step is exactly the defining optimality property of \(\tilde{\pb}\) for the shifted linear instance. Taking the maximum over \(x,x'\in\cA\), we conclude that
\begin{align*}
\mathcal{V}_{\operatorname{cov}}(\cA:\cX, \pb_{\operatorname{xor}}(\cA))
\le 4\mathcal{V}^\star_{\operatorname{lin}}(\cA: \cX-x_1).
\end{align*}
\end{proof}

\section{Proof of Theorem~\ref{theorem; transductive upper bound}}\label{section; apdx proof transductive upper bound}

\subsection{Good Events and Correctness}
For each target arm \(z \in \cZ \setminus \{z^\star\}\), define the gap
\[
\Delta_z := (z^\star-z)^\top \theta^\star,
\qquad
\Delta_{\min}:=\min_{z \in \cZ \setminus \{z^\star\}}\Delta_z.
\]
We also define the deterministic phase cutoff
\[
L_\star := \max\!\left\{1,\left\lceil \log_2\!\left(\frac{4}{\Delta_{\min}}\right) \right\rceil\right\}.
\]

\begin{definition}[Good events]
For each phase \(\ell \in [L_\star]\), let \(\Ecr_\ell\) be the event that
\begin{align*}
\big|(z-z')^\top (\hat{\theta}_\ell - \theta^\star)\big| \le \varepsilon_\ell
\qquad
\text{for all } z,z' \in \cA_\ell.
\end{align*}
Define \(\Ecr_\star := \bigcap_{\ell=1}^{L_\star} \Ecr_\ell\).
\end{definition}

\begin{lemma}
\label{lemma; transductive good event}
For every phase \(\ell \in [L_\star]\), \(\PP[\Ecr_\ell] \ge 1-\delta/(\ell(\ell+1))\). Consequently,
\[
\PP[\Ecr_\star] \ge 1-\delta.
\]
\end{lemma}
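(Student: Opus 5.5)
The plan is to condition on the history up to the start of phase \(\ell\), so that the active set \(\cA_\ell\), the policy \(\pb_\ell\), the radius \(\varepsilon_\ell\), the confidence level \(\delta_\ell\) and the length \(n_\ell\) are all fixed. Within the phase, the \(n_\ell\) samples are drawn i.i.d.\ from the fixed policy \(\pb_\ell\), and the shifts \(\nu_s\) remain \(\cH_{s-1}\)-measurable. This is exactly the setting of the orthogonalized-regression concentration bound \eqref{equation; concentration orthogonalized regression}, with regularizer \(\beta=\log(n_\ell/\delta_\ell)\) as used in Algorithm~\ref{algorithm; transductive fixed confidence BAI}.

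Fix a pair \(z,z'\in\cA_\ell\) and set \(y=z-z'\). I identify the constants \(L\) and \(M\) of \eqref{equation; concentration orthogonalized regression} as follows.

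\textbf{The \(L\)-term.} By definition, \(y^\top\bSigma_{\cov,\pb_\ell}^{-1}y\le \mathcal{V}_{\cov}(\cA_\ell:\cX,\pb_\ell)=:L\). This \(L\) is finite: Proposition~\ref{proposition; performance of XY design}, combined with a mixture covariance bound (Lemma 9 of \citet{pmlr-v291-kim25a} again), shows that \(\bSigma_{\cov,\pb_\ell}\) dominates a constant multiple of \(\widetilde\bSigma\). Any contrast outside the relevant span would make the benchmark infinite, and that case is trivial.

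\textbf{The \(M\)-term.} The policy mixes \(\pb_G\) with weight \(1/2\). I use the mixture covariance lower bound \(\bSigma_{\cov,\pb_\ell}\succeq c\,\bSigma_{\cov,\pb_G}\) (e.g.\ \(c=1/4\), via the same Lemma 9 type argument). Since \(\|x_i-\bar x_{\pb_\ell}\|^2\) relative to \(\bSigma_{\cov,\pb_G}\) is controlled by \eqref{equation; p_G} through the triangle inequality, this gives \(\max_i\|x_i-\bar x_{\pb_\ell}\|^2_{\bSigma_{\cov,\pb_\ell}^{-1}}\le 32d=:M\). I expect pinning down this constant, which must match the factor \(32d\) appearing in \eqref{equation; n_ell}, to be the main obstacle. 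I would first bound \(\|x_i-\bar x_{\pb_G}\|^2_{\bSigma_{\cov,\pb_G}^{-1}}\le 4d\), then handle the recentering shift \(\bar x_{\pb_G}-\bar x_{\pb_\ell}\) by writing it as an average of differences \(x_j-\bar x_{\pb_G}\), and finally absorb the mixture factors.

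With \(L\) and \(M\) in hand, the choice of \(n_\ell\) in \eqref{equation; n_ell} is precisely the sufficient sample size \eqref{equation; sample complexity orthogonalized regression} with \(\Delta=\varepsilon_\ell\) and confidence \(\delta_\ell\). Hence, for this fixed pair, \(|y^\top(\hat\theta_\ell-\theta^\star)|\le\varepsilon_\ell\) holds with conditional probability at least \(1-\delta_\ell\).

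A union bound over the at most \(|\cA_\ell|^2\) ordered pairs gives a conditional failure probability of at most \(|\cA_\ell|^2\delta_\ell=\delta/(\ell(\ell+1))\). Taking expectation over the history gives the unconditional bound \(\PP[\Ecr_\ell^c]\le\delta/(\ell(\ell+1))\). If the algorithm has already stopped before phase \(\ell\), \(\Ecr_\ell\) holds trivially, since \(|\cA_\ell|\le1\) makes the condition vacuous. Finally,
\[
\PP[\Ecr_\star^c]\le\sum_{\ell\ge1}\frac{\delta}{\ell(\ell+1)}=\delta .
\]
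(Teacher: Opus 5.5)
Your proposal is correct and follows the paper's proof essentially step for step. You condition on the history at the start of phase $\ell$, take $L=\mathcal{V}_{\operatorname{cov}}(\cA_\ell:\cX,\pb_\ell)$, bound $M\lesssim d$ through the $\pb_G$ component of the mixture, apply \eqref{equation; concentration orthogonalized regression} with a union bound over the at most $|\cA_\ell|^2$ contrasts, and sum $\delta/(\ell(\ell+1))$ over phases.

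The $32d$ constant is not a real obstacle. The law of total covariance already gives $\bSigma_{\operatorname{cov},\pb_\ell}\succeq\frac12\bSigma_{\operatorname{cov},\pb_G}$, and the paper then writes $x_i-\bar x_{\pb_\ell}=\sum_j \pb_\ell(j)(x_i-x_j)$ and applies the uncentered bound \eqref{equation; p_G}. Your intermediate centered bound $\|x_i-\bar x_{\pb_G}\|^2_{\bSigma_{\operatorname{cov},\pb_G}^{-1}}\le 4d$ does not follow from \eqref{equation; p_G}, which only yields $16d$ by the triangle inequality. In any case, any absolute constant in $M$ is absorbed into $R_2$, which the paper takes large enough.
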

\begin{proof}
Fix a phase \(\ell\), and condition on the history at the start of that phase. Under this conditioning, the active set \(\cA_\ell\), the phase confidence level \(\delta_\ell = \delta/(|\cA_\ell|^2\ell(\ell+1))\), the sampling distribution \(\pb_\ell\), and the phase length \(n_\ell\) are all deterministic. We prove a conditional probability bound that is uniform over the conditioning event, and then remove the conditioning at the end.

Write \(\pb_\ell = \frac12(\pb_{\operatorname{xor}}(\cA_\ell)+\pb_G)\), and let
\[
\bar x_{\operatorname{xor}} := \sum_i \pb_{\operatorname{xor}}(\cA_\ell)(i)x_i,
\qquad
\bar x_G := \sum_i \pb_G(i)x_i.
\]
Viewing \(\pb_\ell\) as the mixture that first chooses between \(\pb_{\operatorname{xor}}(\cA_\ell)\) and \(\pb_G\) with probability \(1/2\) each, the law of total covariance gives
\begin{align*}
\bSigma_{\operatorname{cov},\pb_\ell}
= \frac12 \bSigma_{\operatorname{cov},\pb_{\operatorname{xor}}(\cA_\ell)}
+ \frac12 \bSigma_{\operatorname{cov},\pb_G}
+ \frac14 (\bar x_{\operatorname{xor}}-\bar x_G)(\bar x_{\operatorname{xor}}-\bar x_G)^\top,
\end{align*}
where the final PSD term is the covariance of the two component means. In particular, the mixture covariance dominates each component covariance up to a factor of \(1/2\), so
\begin{align}
\bSigma_{\operatorname{cov},\pb_\ell}
\succeq \frac12 \bSigma_{\operatorname{cov},\pb_{\operatorname{xor}}(\cA_\ell)},
\qquad
\bSigma_{\operatorname{cov},\pb_\ell}
\succeq \frac12 \bSigma_{\operatorname{cov},\pb_G}.
\label{equation; covariance domination mixture}
\end{align}

Hence, for every contrast \(y=z-z'\) with \(z,z'\in\cA_\ell\),
\begin{align*}
\|y\|^2_{\bSigma_{\operatorname{cov},\pb_\ell}^{-1}}
&\le 2\|y\|^2_{\bSigma_{\operatorname{cov},\pb_{\operatorname{xor}}(\cA_\ell)}^{-1}} \\
&\le 2\,\mathcal{V}_{\operatorname{cov}}(\cA_\ell:\cX,\pb_{\operatorname{xor}}(\cA_\ell)) \\
&\le 8\,\mathcal{V}_{\operatorname{lin}}^\star(\cA_\ell:\cX-x_1),
\end{align*}
where the last step uses Proposition~\ref{proposition; performance of XY design}. Thus, in the concentration bound \eqref{equation; concentration orthogonalized regression}, the leading variance parameter \(L\) for the contrast set \(\cA_\ell-\cA_\ell\) is controlled, up to an absolute constant, by \(\mathcal{V}_{\operatorname{cov}}(\cA_\ell:\cX,\pb_\ell)\).

We next control the source-only quantity \(M\) in \eqref{equation; concentration orthogonalized regression}. By \eqref{equation; covariance domination mixture} and the definition of \(\pb_G\),
\begin{align*}
\|x_i\|_{\bSigma_{\operatorname{cov},\pb_\ell}^{-1}}^2
\le 2 \|x_i\|_{\bSigma_{\operatorname{cov},\pb_G}^{-1}}^2
\le 8d
\qquad \text{for every } i \in [K].
\end{align*}
Moreover,
\begin{align*}
\|x_i-\bar x_{\pb_\ell}\|_{\bSigma_{\operatorname{cov},\pb_\ell}^{-1}}
&\le \sqrt{2}\,\|x_i-\bar x_{\pb_\ell}\|_{\bSigma_{\operatorname{cov},\pb_G}^{-1}} \\
&= \sqrt{2}\,\left\|\sum_{j=1}^K \pb_\ell(j)(x_i-x_j)\right\|_{\bSigma_{\operatorname{cov},\pb_G}^{-1}} \\
&\le \sqrt{2}\sum_{j=1}^K \pb_\ell(j)\big(\|x_i\|_{\bSigma_{\operatorname{cov},\pb_G}^{-1}}+\|x_j\|_{\bSigma_{\operatorname{cov},\pb_G}^{-1}}\big) \\
&\le \sqrt{2}\cdot 2 \sqrt{4d}
= 4\sqrt{2d},
\end{align*}
and therefore
\begin{align}
\max_{i\in[K]}\|x_i-\bar x_{\pb_\ell}\|_{\bSigma_{\operatorname{cov},\pb_\ell}^{-1}}^2 \le 32d.
\label{equation; lower-order control transductive}
\end{align}

Applying the concentration bound \eqref{equation; concentration orthogonalized regression} conditionally on the phase-\(\ell\) history to each vector in the finite set \(\cA_\ell-\cA_\ell\), using the \(L\)-control above together with the source-only \(M\)-bound \eqref{equation; lower-order control transductive}, and taking a union bound over at most \(|\cA_\ell|^2\) contrasts, we obtain
\[
\PP[\Ecr_\ell] \ge 1-|\cA_\ell|^2\delta_\ell
= 1-\frac{\delta}{\ell(\ell+1)}
\]
conditionally on the phase-\(\ell\) history, provided the absolute constants \(R_1,R_2\) in \eqref{equation; n_ell} are chosen large enough. Since the bound is uniform over the conditioning event, it also holds unconditionally. Finally,
\[
\sum_{\ell=1}^{\infty}\frac{\delta}{\ell(\ell+1)}
\le \delta,
\]
so \(\PP[\Ecr_\star]\ge 1-\delta\).
\end{proof}

\begin{lemma}[Correctness]
\label{lemma; transductive correctness}
Under \(\Ecr_\star\), the best arm \(z^\star\) is never eliminated, and after phase \(L_\star\) the active set is exactly \(\{z^\star\}\).
\end{lemma}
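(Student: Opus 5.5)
The proof is a deterministic induction on the phase index $\ell$ under $\Ecr_\star$. We track two invariants: (i) $z^\star\in\cA_\ell$, and (ii) every $z\in\cA_{\ell+1}$ satisfies $\Delta_z < 2\varepsilon_\ell = 4\varepsilon_{\ell+1}$. Since $\cA_1=\cZ$, (i) holds at $\ell=1$. Throughout, the only tool is the two-sided contrast bound $|(z-z')^\top(\hat\theta_\ell-\theta^\star)|\le\varepsilon_\ell$ for all $z,z'\in\cA_\ell$, which $\Ecr_\ell$ provides for every $\ell\le L_\star$.

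\textbf{Step 1: $z^\star$ survives.} Suppose $z^\star\in\cA_\ell$ with $\ell\le L_\star$. Take $z=\hat z_\ell$ and $z'=z^\star$ in the good event:
\[
(\hat z_\ell - z^\star)^\top\hat\theta_\ell \le (\hat z_\ell-z^\star)^\top\theta^\star + \varepsilon_\ell = -\Delta_{\hat z_\ell}+\varepsilon_\ell .
\]
If $\hat z_\ell\neq z^\star$, then $\Delta_{\hat z_\ell}>0$ and the right-hand side is strictly less than $\varepsilon_\ell$. If $\hat z_\ell=z^\star$, the left-hand side is $0<\varepsilon_\ell$. In both cases $z^\star$ passes the strict test $(\hat z_\ell-z^\star)^\top\hat\theta_\ell<\varepsilon_\ell$, so $z^\star\in\cA_{\ell+1}$.

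\textbf{Step 2: bad arms are eliminated.} Take any $z\in\cA_\ell$ with $\Delta_z\ge 2\varepsilon_\ell$. Because $\hat z_\ell$ maximizes $z^\top\hat\theta_\ell$ over $\cA_\ell$,
\[
(\hat z_\ell - z)^\top\hat\theta_\ell \ge (z^\star - z)^\top\hat\theta_\ell \ge \Delta_z-\varepsilon_\ell\ge\varepsilon_\ell ,
\]
where the middle inequality applies the good event to the pair $(z^\star,z)$. Hence $z$ fails the test and is removed. Therefore every survivor in $\cA_{\ell+1}$ has $\Delta_z<2\varepsilon_\ell=2^{1-\ell}$.

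\textbf{Step 3: termination by phase $L_\star$.} By definition of $L_\star$, we have $2^{-L_\star}\le\Delta_{\min}/4$, so $2\varepsilon_{L_\star}\le\Delta_{\min}/2<\Delta_{\min}$. Applying Step 2 at phase $L_\star$ therefore removes every suboptimal arm still present, and Step 1 keeps $z^\star$. So $\cA_{L_\star+1}=\{z^\star\}$.

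The main obstacle is the bookkeeping. First, the algorithm may stop before phase $L_\star$ (when $|\cA_\ell|=1$ earlier); by Step 1 the lone survivor is then $z^\star$, and we adopt the convention that later sets stay $\{z^\star\}$. Second, $\Ecr_\ell$ is only defined and guaranteed for $\ell\le L_\star$, so the induction must be run only over $\ell\in[L_\star]$. Neither issue needs more than this care.
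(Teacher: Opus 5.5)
Your proof is correct and follows the paper's argument essentially step for step. It keeps $z^\star$ via the good event applied to the pair $(\hat z_\ell, z^\star)$, removes every arm with $\Delta_z \ge 2\varepsilon_\ell$ via the maximizing property of $\hat z_\ell$, and terminates because $2\varepsilon_{L_\star} \le \Delta_{\min}/2 < \Delta_{\min}$. Your explicit handling of the case $\hat z_\ell = z^\star$ and of the strict elimination threshold is slightly more careful than the paper's, which writes a non-strict inequality where a strict one is needed for the contradiction. You also apply the per-phase elimination directly at phase $L_\star$ instead of running the cumulative bound $\cA_\ell \subseteq \{z : \Delta_z \le 4\varepsilon_\ell\}\cup\{z^\star\}$, which the paper needs only later for the sample-complexity bound.
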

\begin{proof}
We first show that \(z^\star\) is never eliminated. Suppose \(z^\star \in \cA_\ell\). If \(z^\star\) were removed in phase \(\ell\), then by the elimination rule there would exist some \(\hat z_\ell \in \cA_\ell\) such that
\[
(\hat z_\ell-z^\star)^\top \hat\theta_\ell \ge \varepsilon_\ell.
\]
Since \(z^\star\) is optimal, \((\hat z_\ell-z^\star)^\top\theta^\star < 0\), hence
\[
(\hat z_\ell-z^\star)^\top(\hat\theta_\ell-\theta^\star)\ge \varepsilon_\ell,
\]
which contradicts \(\Ecr_\ell\). Therefore \(z^\star\) survives every phase.

Next consider any suboptimal arm \(z \in \cA_\ell\) with \(\Delta_z > 2\varepsilon_\ell\). Since \(\hat z_\ell\) maximizes \(u^\top \hat\theta_\ell\) over \(u \in \cA_\ell\),
\begin{align*}
(\hat z_\ell-z)^\top \hat\theta_\ell
\ge (z^\star-z)^\top \hat\theta_\ell
\ge \Delta_z - \varepsilon_\ell
> \varepsilon_\ell,
\end{align*}
where the second step uses \(\Ecr_\ell\). Thus every arm with gap larger than \(2\varepsilon_\ell\) is removed in phase \(\ell\), so
\[
\cA_{\ell+1}\subseteq \{z \in \cZ : \Delta_z \le 2\varepsilon_\ell\}.
\]
For the base case \(\ell=1\), we have \(\cA_1=\cZ\), and for every \(z \in \cZ\),
\[
\Delta_z
= (z^\star-z)^\top \theta^\star
\le \|z^\star-z\|_2 \|\theta^\star\|_2
\le 2
= 4\varepsilon_1.
\]
Hence
\[
\cA_1 \subseteq \{z \in \cZ : \Delta_z \le 4\varepsilon_1\} \cup \{z^\star\}.
\]
Using \(\cA_{\ell+1}\subseteq \{z \in \cZ : \Delta_z \le 2\varepsilon_\ell\}\) and \(\varepsilon_{\ell+1}=\varepsilon_\ell/2\), an immediate induction gives, for every \(\ell \ge 1\),
\[
\cA_\ell \subseteq \{z \in \cZ : \Delta_z \le 4\varepsilon_\ell\} \cup \{z^\star\}.
\]
Finally, by the definition of \(L_\star\), we have \(4\varepsilon_{L_\star} \le \Delta_{\min}\). Therefore no suboptimal arm can remain after phase \(L_\star\), and the active set is \(\{z^\star\}\).
\end{proof}

\subsection{Bounding Sample Complexity}

Let
\[
\tau_\star := \bm\tau_{\operatorname{lin}}^\star(\cZ : \cX-x_1).
\]
Fix an optimal design \(\pb^\star \in \Delta^{(K)}\) for \(\tau_\star\), and write
\[
\bSigma^\star := \sum_{i=1}^K p_i^\star (x_i-x_1)(x_i-x_1)^\top .
\]
Under \(\Ecr_\star\), Lemma~\ref{lemma; transductive correctness} shows that every arm \(z \in \cA_\ell\setminus\{z^\star\}\) satisfies
\[
\Delta_z \le 4\varepsilon_\ell.
\]

We first compare the linear design value for \(\cA_\ell\) with the full-instance lower-bound benchmark. For any policy \(\pb\),
\[
\max_{u,u' \in \cA_\ell}\|u-u'\|^2_{\bSigma_{-1,\pb}^{-1}}
\le 4\max_{z \in \cA_\ell\setminus\{z^\star\}}\|z^\star-z\|^2_{\bSigma_{-1,\pb}^{-1}},
\]
because every pairwise difference can be written as
\[
u-u' = (u-z^\star) - (u'-z^\star)
\]
and \(\|a-b\|^2_{\Ab^{-1}} \le 2\|a\|^2_{\Ab^{-1}} + 2\|b\|^2_{\Ab^{-1}}\).
Evaluating this bound at \(\pb^\star\), we obtain
\begin{align}
\mathcal{V}_{\operatorname{lin}}^\star(\cA_\ell:\cX-x_1)
&\le 4\max_{z \in \cA_\ell\setminus\{z^\star\}}
\|z^\star-z\|^2_{(\bSigma^\star)^{-1}} \notag\\
&\le 4\tau_\star \max_{z \in \cA_\ell\setminus\{z^\star\}} \Delta_z^2 \\
&\le 64 \tau_\star \varepsilon_\ell^2.
\label{equation; active set linear variance bound}
\end{align}
Equation~\eqref{equation; active set linear variance bound} is the key step that connects elimination to the benchmark \(\tau_\star\). The first inequality converts arbitrary pairwise contrasts inside \(\cA_\ell\) into contrasts against the true best arm \(z^\star\). The second inequality uses the defining property of the optimal benchmark design \(\pb^\star\): for every competitor \(z \neq z^\star\),
\[
\|z^\star-z\|^2_{(\bSigma^\star)^{-1}} \le \tau_\star \Delta_z^2.
\]
The last step then uses the consequence of the good event \(\Ecr_\star\), namely that every suboptimal arm surviving into phase \(\ell\) must satisfy \(\Delta_z \le 4\varepsilon_\ell\). Thus, once the active set has shrunk to near-optimal arms, its entire linear \(\cX\cY\)-design value is automatically of order \(\tau_\star \varepsilon_\ell^2\). This is exactly what allows the phase length to track the instance-dependent benchmark rather than a cruder worst-case quantity.

By \eqref{equation; covariance domination mixture} and Proposition~\ref{proposition; performance of XY design},
\begin{align}
\mathcal{V}_{\operatorname{cov}}(\cA_\ell:\cX,\pb_\ell)
&\le 2\,\mathcal{V}_{\operatorname{cov}}(\cA_\ell:\cX,\pb_{\operatorname{xor}}(\cA_\ell)) \notag\\
&\le 8\,\mathcal{V}_{\operatorname{lin}}^\star(\cA_\ell:\cX-x_1)
\le 512 \tau_\star \varepsilon_\ell^2.
\label{equation; active set semiparametric variance bound}
\end{align}

Substituting \eqref{equation; active set semiparametric variance bound} into \eqref{equation; n_ell}, we find
\begin{align*}
n_\ell
&\lesssim \tau_\star \log\!\left(\frac{\tau_\star \varepsilon_\ell}{\delta_\ell}\right)
 + d\sqrt{\tau_\star}\log\!\left(\frac{d}{\delta_\ell}\right).
\end{align*}
Since \(\delta_\ell = \delta/(|\cA_\ell|^2\ell(\ell+1))\), \(|\cA_\ell| \le H\), \(\varepsilon_\ell = 2^{-\ell}\), and \(L_\star = \cO(\max\{1,\log(1/\Delta_{\min})\})\), all logarithmic factors above are polylogarithmic in \(d\), \(H\), \(1/\delta\), and \(1/\Delta_{\min}\). Therefore,
\[
n_\ell \le \tilde{\cO}(\tau_\star + d\sqrt{\tau_\star}).
\]
Summing over \(\ell = 1,\dots,L_\star\) yields
\begin{align*}
\sum_{\ell=1}^{L_\star} n_\ell
&\le \tilde{\cO}\!\left(L_\star(\tau_\star + d\sqrt{\tau_\star})\right) \\
&\le \tilde{\cO}\!\left(L_\star(\tau_\star + d^2)\right),
\end{align*}
where the second step uses \(d\sqrt{\tau_\star} \le \tfrac12(d^2+\tau_\star)\).
Finally, on \(\Ecr_\star\) the algorithm stops by phase \(L_\star\) by Lemma~\ref{lemma; transductive correctness}, so
\[
\bm\tau(\delta) \le \sum_{\ell=1}^{L_\star} n_\ell \le \tilde{\cO}(\tau_\star + d^2).
\]
This proves Theorem~\ref{theorem; transductive upper bound}.

\section{More on Experiments}
\label{section; more experiments}

\subsection{Detailed Synthetic Results}
We summarize the synthetic experiments again, now including empirical standard deviations of the stopping time. All tables below are based on \(100\) independent runs with \(R_1=R_2=1/3\). Tables~\ref{tab:sim-summary-smallgapd10} and \ref{tab:sim-summary-uniformk100} correspond to the synthetic instances in the main text, and include the full \texttt{RAGE} grid used to diagnose linear-model misspecification.
\begin{table}[H]
\centering
\caption{Simulation Results Summary: Small-gap, \(d=10\)}
\label{tab:sim-summary-smallgapd10}
\begin{tabular}{
    l
    S[table-format=7.2]  
    S[table-format=7.2]  
    S[table-format=1.4]  
}
\toprule
{Algorithm} & {Avg.\ $\bm \tau$} & {Std.\ $\bm \tau$} & {Avg.\ Error Prob.} \\
\midrule
\texttt{SBE} & 1881513.94 & 1228628.36 & 0.0000 \\
\texttt{G-Opt} & 387106.00 & 0.00 & 0.0000 \\
\texttt{SP-BAI} & 187924.68 & 112295.59 & 0.0000 \\
\texttt{RAGE} (\(\sigma=1\)) & 22212.00 & 0.00 & 1.0000 \\
\texttt{RAGE} (\(\sigma=2\)) & 88848.00 & 0.00 & 1.0000 \\
\texttt{RAGE} (\(\sigma=3\)) & 199908.00 & 0.00 & 1.0000 \\
\texttt{RAGE} (\(\sigma=4\)) & 355392.00 & 0.00 & 1.0000 \\
\bottomrule
\end{tabular}
\end{table}

\begin{table}[H]
\centering
\caption{Simulation Results Summary: Uniform-feature, \(d=10\), \(K=100\)}
\label{tab:sim-summary-uniformk100}
\begin{tabular}{
    l
    S[table-format=7.2]  
    S[table-format=7.2]  
    S[table-format=1.4]  
}
\toprule
{Algorithm} & {Avg.\ $\bm \tau$} & {Std.\ $\bm \tau$} & {Avg.\ Error Prob.} \\
\midrule
\texttt{SBE} & 1523534.64 & 2822607.36 & 0.1000 \\
\texttt{G-Opt} & 323509.88 & 904131.81 & 0.2900 \\
\texttt{SP-BAI} & 2038886.49 & 2930198.90 & 0.0100 \\
\texttt{RAGE} (\(\sigma=1\)) & 4367940.79 & 1924666.73 & 0.9500 \\
\texttt{RAGE} (\(\sigma=2\)) & 5415498.52 & 1952295.28 & 0.9500 \\
\texttt{RAGE} (\(\sigma=3\)) & 6065302.82 & 2243377.14 & 0.9400 \\
\texttt{RAGE} (\(\sigma=4\)) & 6428781.60 & 2434510.77 & 0.9600 \\
\bottomrule
\end{tabular}
\end{table}

The integrated \texttt{RAGE} rows make the misspecification pattern easy to read across instances. On the reported small-gap instance, \texttt{RAGE} fails uniformly across all tested noise scales \(\sigma \in \{1,2,3,4\}\): the empirical error probability is \(1.0\), while larger \(\sigma\) only increases the stopping time. The uniform-feature instance shows the same qualitative behavior, though less starkly, with error probabilities remaining around \(0.94\) to \(0.96\).

\subsection{Additional Jester Results}
Table~\ref{tab:jester_appendix_full_grid} reports the full \(\sigma\)-grid used for the standard MAB baselines in the Jester fixed-confidence experiment. In the main text, we report only the smallest-sample configuration whose empirical error rate is below \(10\%\); the full grid is included here for transparency.

\begin{table}[H]
\centering
\caption{Jester fixed-confidence experiment: full \(\sigma\)-grid over 100 runs at \(\delta=0.1\).}
\label{tab:jester_appendix_full_grid}
\small
\begin{tabular}{l S[table-format=1.3] S[table-format=6.1]}
\toprule
{Method} & {Success Prob.} & {Avg. Pulls} \\
\midrule
\texttt{SP-BAI} & 0.970 & 61546.4 \\
\texttt{SBE} & 0.980 & 266706.2 \\
\midrule
\texttt{LUCB} ($\sigma=1$) & 0.300 & 64.0 \\
\texttt{LUCB} ($\sigma=1.5$) & 0.460 & 3742.8 \\
\texttt{LUCB} ($\sigma=2$) & 0.540 & 12677.2 \\
\texttt{LUCB} ($\sigma=2.5$) & 0.740 & 31959.7 \\
\texttt{LUCB} ($\sigma=3$) & 0.900 & 82030.7 \\
\texttt{LUCB} ($\sigma=3.5$) & 0.960 & 136698.2 \\
\texttt{LUCB} ($\sigma=4$) & 0.960 & 181116.9 \\
\midrule
\texttt{AE} ($\sigma=1$) & 0.660 & 14286.9 \\
\texttt{AE} ($\sigma=1.5$) & 0.980 & 105938.8 \\
\texttt{AE} ($\sigma=2$) & 1.000 & 185485.2 \\
\texttt{AE} ($\sigma=2.5$) & 1.000 & 340073.2 \\
\texttt{AE} ($\sigma=3$) & 1.000 & 495855.5 \\
\texttt{AE} ($\sigma=3.5$) & 1.000 & 627211.7 \\
\texttt{AE} ($\sigma=4$) & 1.000 & 870003.1 \\
\bottomrule
\end{tabular}
\end{table}

\end{document}